\newtheorem{thm}{Theorem}
\newtheorem{cor}{Corollary}
\newtheorem{Definition}{Definition}
\newtheorem{Property}{Property}
\def\BibTeX{{\rm B\kern-.05em{\sc i\kern-.025em b}\kern-.08em
    T\kern-.1667em\lower.7ex\hbox{E}\kern-.125emX}}
\begin{document}

\title{
Scalable and 
Precise Patch Robustness Certification for  
Deep Learning Models
with Top-$k$ Predictions
}

 \author{
Qilin Zhou$^{1}$\,\orcidlink{0000-0003-2289-9849}, Haipeng Wang$^{1}$\,\orcidlink{0000-0002-7410-393X}, Zhengyuan Wei$^{2}$\,\orcidlink{0000-0001-5966-1338}, and W.K. Chan$^{1,*}$\,\orcidlink{0000-0001-7726-6235}\\
    \{qilin.zhou, haipeng.wang\}@my.cityu.edu.hk,
    zywei@eduhk.hk,
    wkchan@cityu.edu.hk\\
\IEEEauthorblockA{$^1$City University of Hong Kong, Hong Kong  $^2$The Education University of Hong Kong, Hong Kong}
 	\normalsize *corresponding author

     \thanks{
    This research is supported in part by CityU MF\_EXT (project no. 9678180) and 
    a central fund from EdUHK (project no. GIET-02A14).
    }
 }



\maketitle
\begin{abstract}
Patch robustness certification is an emerging verification approach for defending against adversarial patch attacks with \emph{provable} guarantees for deep learning systems.
Certified recovery techniques guarantee the prediction of the \emph{sole} true label of a certified sample.
However, existing techniques, if applicable to top-$k$ predictions, commonly conduct pairwise comparisons on those votes between labels, failing to certify the sole true label within the top $k$ prediction labels \emph{precisely} due to the inflation on the number of votes controlled by the attacker (i.e., attack budget);
yet enumerating all combinations of vote allocation suffers from the combinatorial explosion problem.
We propose CostCert, a novel, scalable, and precise
voting-based certified recovery defender.
CostCert verifies the true label of a sample within the top $k$ predictions without pairwise comparisons and combinatorial explosion through a novel design: whether the attack budget on the sample is infeasible to cover the smallest total additional votes on top of the votes uncontrollable by the attacker to exclude the true labels from the top $k$ prediction labels.
Experiments show that CostCert significantly outperforms the current
state-of-the-art defender PatchGuard, such as retaining up to 57.3\% in certified accuracy when the patch size is 96, whereas PatchGuard has already dropped to zero.
\end{abstract}
\begin{IEEEkeywords}
 \itshape Top-k certification,  
 patch attacks,
 robustness,
 worst-case analysis, 
 verification,
 deep learning model
\end{IEEEkeywords}



\section{Introduction}

\label{sec:introduction}

Deep learning (DL) systems 
are susceptible to adversarial attacks \cite{brown2017adversarial,szegedy2013intriguing, liu2020bias}. 
For instances, 
tag-like perturbations 
patched on items can fool online shopping platforms \cite{liu2020bias} (see Fig. \ref{fig:patch_example}), which 
 is an example generally known as a
\textbf{patch attack} 
\cite{brown2017adversarial}: a model for \emph{physical} adversarial attacks
\cite{levine2020randomized}, where attackers can change pixels arbitrarily within a specific region (called a patch region \cite{zhou2024crosscert, xiang2021patchguard, patchcensor}) in a sample. 
Quality assurance on them and improving their robustness are essential. 

\emph{Certified defense} \cite{zhang2020clipped,xiang2021patchguard,xiang2022patchcleanser,levine2020randomized,salman2022certified,chiang2020certified,metzen2021efficient, chen2022towards,li2022vip,patchcensor, zhou2023majority,zhou2024crosscert}, a kind of novel defense based on verification methods on benign samples against patch attacks with \emph{provable} guarantee, is emerging,
unlike empirical defenses \cite{hayes2018visible,naseer2019local} which becomes fragile if attackers know their defense strategies \cite{chiang2020certified}. 
%
In particular,
\textbf{\emph{certified recovery}} \cite{zhang2020clipped,xiang2021patchguard,xiang2022patchcleanser,levine2020randomized,salman2022certified,chiang2020certified,metzen2021efficient, chen2022towards,zhou2023majority,li2022vip} gives certifications to samples that they promise to provide a \textbf{\emph{deterministic guarantee}} that \emph{always return} the \emph{true label} 
even if these samples are arbitrarily perturbed within the patch region sizes (commonly known as the patch sizes) they have analyzed in their verification processes. 

Almost all existing certified recovery defenders are devoted to offering guarantees to the top-1 label only 
\cite{chiang2020certified,levine2020randomized,salman2022certified,chen2022towards,lin2021certified,xiang2021patchguard,metzen2021efficient,zhang2020clipped,xiang2022patchcleanser,xiang2024patchcure,zhou2023majority,li2022vip}.
DL systems with top-$k$ predictions \cite{berrada2018smooth,hsu2021fingat,petersen2022differentiable} return $k$ (where $k \geq 1$) labels (referred to as the top-$k$ labels) for a given sample, for example, the photo-to-product feature in e-commerce apps \cite{liu2020bias} and the photo-to-herb feature in Chinese medicine herbs apps.
Patched images may make the true label fall outside the visible ranges (see Fig.~\ref{fig:patch_example}), potentially pushing some unexpected labels (e.g., counterfeit items or toxic herbs) to list first.
%
Recently, increasing attacks \cite{zhang2022investigating,tursynbek2022geometry} target such an application scenario of top-$k$ predictions, leading to an emerging demand for its robustness certification \cite{jia2020certified, jia2022almost}.

Patch robustness certification for top-$k$ predictions aims to ensure that a sample's true label is among the top-$k$ labels, for $k \geq 1$, predicted by a deep learning model, even when the sample is under patch attacks.




\begin{figure}[bt] 
\centering
\begin{subfigure}[t]{\linewidth}
\centering
\includegraphics[width=\linewidth]{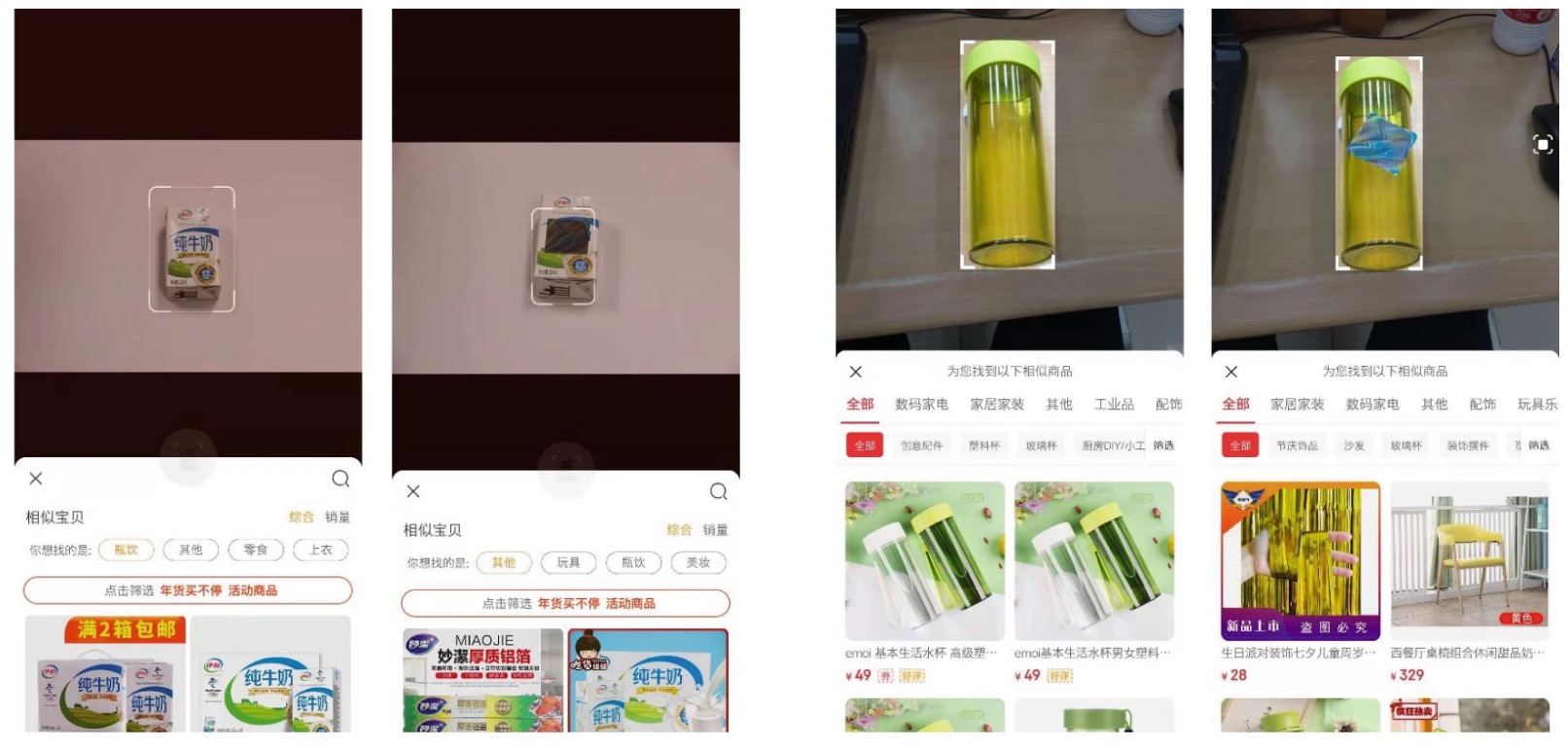}
\end{subfigure}

\captionsetup{font={small}}
\caption{Examples of patch attacks 
\cite{liu2020bias}. 
Two pairs of images each show an item with a patch fooling Taobao.com and JD.com to make the correct prediction ranked second and out of the top 2, respectively, which may distract users from selecting the right items for purchase.}
\label{fig:patch_example}
\end{figure}

%





To our knowledge, only PatchGuard \cite{xiang2021patchguard} and CBN \cite{zhang2020clipped} \textbf{can} provide patch robustness certification guarantees for top-$k$ predictions scalable enough to verify real-life benchmarks like ImageNet
in the literature
\cite{zhang2020clipped,xiang2021patchguard,xiang2022patchcleanser,saha2023revisiting,salman2022certified,chiang2020certified,metzen2021efficient, chen2022towards,zhou2023majority, patchcensor, li2022vip, mccoyd2020minority}.
They are both \emph{voting-based} certified recovery, a 
mainstream certified defense approach for patch attacks 
\cite{zhang2020clipped,xiang2021patchguard,levine2020randomized,salman2022certified,metzen2021efficient,chen2022towards,zhou2023majority,li2022vip}. 
However, our findings (see Section 
2.4) further shows that, as the patch size increases linearly,
both defenders 
certify ImageNet samples with increasing difficulties, such as becoming unable to certify \emph{any} ImageNet samples if the patch size reaches 96 pixels (about a quarter of a sample's area) unless $k = 1000$ (note that there are only 1000 classes for ImageNet, therefore, the guarantee is trivial for all ImageNet samples).
We find that their pair-wise comparison inflates the attack budget (the number of votes an attacker can control) during the certification (see Section 3.1).
A straightforward solution without inflation is to check against all the possible combinations (the number is astronomical) of allocation of the attack budget, which is intractable.
Masking-based certified recovery, like PatchCleanser \cite{xiang2022patchcleanser}, is another mainstream approach but \textbf{cannot} give certifications for top-$k$ predictions 
because its certification theory heavily relies on the full consensus of mutants (all votes to the true label),
only support certification with the true label as the top-1 label.
Interval bound propagation, such as IBP \cite{chiang2020certified}, 
performs 
symbolic value propagation across layers, which can only certify low-resolution samples 
\cite{xiang2021patchguard}.

This paper proposes CostCert, a novel, scalable, and precise defender for voting-based certified recovery for top-$k$ predictions.
Our main insight is that those votes unaffected by attackers hold the key to determine the smallest total additional votes (smallest tie cost) atop them 
once the required $k$ is given.
By comparing this cost with the attack budget, CostCert can provide the guarantee precisely without the inflation problem and combinatorial enumeration, the \emph{first work} to achieve both.
A  voting-based recovery defender hides different parts of a sample \textit{x} to generate its ablated mutants (see Fig.~\ref{fig:voting_2}).
The prediction label of 
a mutant that overlaps with a given patch region is deemed to be controlled by attackers.
The defender counts the prediction label of each mutant as a vote for that label, and outputs the labels in the descending order of the votes they receive as the output $g(x)$.
For each pair of a label $y \in g(x)$ and a patch region,
unlike all existing works,
CostCert only counts the number of votes received by $y$ that attackers \emph{cannot} control, which we refer to it
as the \textbf{\textit{clean vote}} of $y$.
Suppose that there are \emph{$n$}  ($n \leq k$) labels not smaller than $y_0$ in the clean vote.
For each patch region, 
CostCert finds $k'$ $(k'=k-n+1)$ labels from $g(x)$ each with a smaller clean vote than $y_0$ such that they as a whole are allocated with minimum additional votes atop their clean votes to make their individual resulting numbers of votes not smaller than $y_0$'s clean vote, attaining the minimum number of additional votes in total (called the smallest tie cost), 
i.e.,
to ensure $k$ number of labels not to rank after 
$y_0$ at minimum.
If the attack budget (i.e., {the maximum number} of votes controlled by attackers) is smaller than 
{the smallest tie cost for all patch regions,
CostCert reports \textit{x} as $k$-certified, i.e., the true label $y_0$ must remain one of the top-$k$ labels even if a malicious version of \textit{x} is input to the defender for top-$k$ predictions, proven by} Thm.~\ref{Thm:new_tk}.
The experiment shows that CostCert wins the peers in top-$k$ certification to a larger extent as $k$ increases or as the patch size increases for the same $k$.

The main contribution of this paper is threefold. (1) The paper proposes CostCert. 
(2) It formally proves CostCert's soundness and its superior precision. 
(3) It validates CostCert's 
high effectiveness and scalability through experiments.



The rest of the paper is organized as follows. We first revisit the preliminaries of this work in \S 2. Then, \S 3 presents our proposal CostCert followed by its evaluation in \S 4. \S5 reviews the related works, and we conclude this paper in \S 6.

\section{Preliminary}
\label{sec:pre}

\subsection{Classification model}
\label{sec:classification-model}
We represent an image sample $\textit{x}$ as a matrix of pixels with $h$ rows and $w$ columns taken from 
an input space $\mathcal{X} \subset \mathbb{R}^{w \times h}$.
A classifier $g$ takes $\textit{x}$ as input to output a sequence of all labels $g(\textit{x})$ {ranked} by the confidence,
where
each label is taken from a 
 label space $\mathcal{Y}=\{0, 1, \cdots, |\mathcal{Y}|-1\}$, and
we follow the existing work of voting-based certified recovery defenders 
\cite{xiang2021patchguard, zhang2020clipped} 
to use the number of ablated mutants  of \textit{x} voting for a label $y$ as $y$'s (vote) confidence  $v_y(x)$ (see Fig. 2 and Section 
2.3.1) for a voting-based recovery defender,
i.e., $g(\textit{x})=\langle y_1, \ldots, y_{|\mathcal{Y}|}\rangle$, where $v_{y_i}(\textit{x})\geq v_{y_j}(\textit{x})$ if $i<j$ (tie cases are resolved following \cite{levine2020randomized} in practice, which conservatively remain unknown for certification). 
We use $g_i(\textit{x})$
to denote 
the label placed in the $i$-th position in $g(\textit{x})$ and $g^k(\textit{x})=\langle y_1, \ldots, y_{k}\rangle$
to denote 
the prefix of $g(\textit{x})$ 
with length $k$ where $1 \leq k \leq |\mathcal{Y}|$.
{For the top-1 and top-$k$ predictions, the top-1 label $g_1(\textit{x})$ and the sequence of the top-$k$ labels $g^k(\textit{x})$ are output, respectively.}


\begin{figure}[b] 
\centering
\includegraphics[width=1\linewidth]{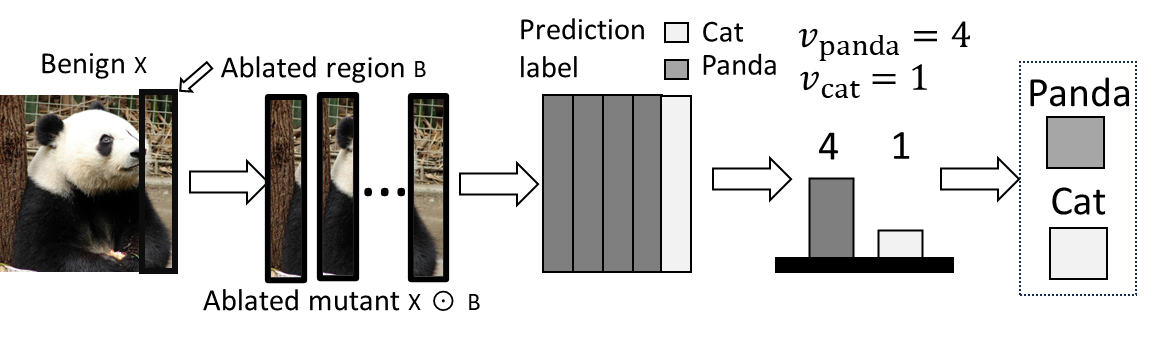}
\captionsetup{font={small}}
\caption{Voting-based certified recovery.
Such a defender $R$ makes predictions in three steps.
Step (1): $R$ synthesizes the image \textit{x} with all 5 ablated regions, which
produces 5 ablated mutants (and 3 mutants are shown as three stripes).
Step (2): 4 and 1 mutants vote for 
\textit{Panda} (dark) and \textit{Cat} (light), and their sums of votes (vote confidence) are denoted by $v_{\text{panda}}$ and
$v_{\text{cat}}$.
Step (3): $R$ outputs a sequence of labels in descending order of vote confidence, which is $\langle$
\textit{Panda},  \textit{Cat}%
$\rangle$.
}
\label{fig:voting_2}
\end{figure}

\subsection{Patch attack}
A patch attacker can modify pixels within a specific \textbf{patch region} of an image, denoted by a matrix $\textsc{p} \in \mathbb{P} \subset [0,1]^{w \times h}$, where $\mathbb{P}$ represents the set of all possible patch regions. The elements within the patch region are set to 1, otherwise 0. 
Following \cite{patchcensor,xiang2021patchguard,xiang2022patchcleanser,zhou2024crosscert}, {we model the samples generated from a patch attacker by an attacker constraint set} $\mathbb{A}(\textit{x})$, defined as $\mathbb{A}(\textit{x}) = \{\textit{x}' \mid \textit{x}'=(\textsc{J}-\textsc{p})\odot \textit{x}+\textsc{p}\odot \textit{x}'' \land \textsc{p}\in \mathbb{P} \}$, where \textsc{J} is an all-one matrix \cite{horn2012matrix}, $\textit{x} \in \mathcal{X}$ is the original image without modification (\textbf{benign sample}), $\textit{x}' \in \mathcal{X}$ is an image provided by the attacker (\textbf{malicious sample}), and $\textit{x}'' \in \mathcal{X}$ is an arbitrary image. $+$, $-$, and $\odot$ are element-wise matrix operators for addition, subtraction, and multiplication \cite{zhou2024crosscert}.
Intuitively, this constraint set for \textit{x} quantifies that any change to \textit{x} can be arbitrary and must be made within exactly one patch region 
(i.e., $\textsc{p}\odot \textit{x}''$) and any other part of the resulting malicious sample should be the same as \textit{x} (i.e., $(\textsc{J}-\textsc{p})\odot \textit{x}$).
Attackers can apply any technique to produce $\textit{x}'$. 
 Given a benign sample $\textit{x}$ and its true label $y_0$, an (untarget) attacker aims to find a sample $\textit{x}' \in \mathbb{A}(\textit{x})$ such that $y_0 \notin g^k(\textit{x}')$
 \cite{zhang2022investigating, tursynbek2022geometry}.


\subsection{Voting-based Certified recovery against patch attacks}
\subsubsection{Definitions and defenders}
\label{sec:voting-based-recovery}






A \textbf{voting-based recovery defender} \cite{zhang2020clipped,xiang2021patchguard,levine2020randomized,salman2022certified,metzen2021efficient,chen2022towards,zhou2023majority,li2022vip} $R = \langle g, c \rangle$
includes a classifier $g$ and a certification analyzer\,$c$:
Given a sample $x$,  $g$ predicts $g(x)$ and $c$ decides whether $R$ certifies $x$.
$R$
aims at $g$ predicting the true label of a certified sample, even under patch attacks.
Its main idea is to limit the impact of any potential patch on the classifier $g$ built by the base classification model $f$
by cutting $x$ into pieces (called ablated mutants) to serve as $f$'s inputs and count their top-1 labels ($\in \mathcal{Y}$) to represent their vote confidences for the labels in $g(\textit{x})$ \cite{levine2020randomized,xiang2021patchguard,salman2022certified,li2022vip, chen2022towards, metzen2021efficient, zhang2020clipped, zhou2023majority}.

We represent an \textbf{ablated region} by a matrix $\textsc{b}\in\mathbb{B}\subset [0,1]^{w \times h}$, where $\mathbb{B}$ contains all ablated regions.
The elements within and outside the region are respectively set to 1 and 0 in $\textsc{b}$.
Applying \textsc{b} to \textit{x} (i.e., $\textit{x}_\textsc{b}$=$\textit{x}\odot\textsc{b}$) generates the \textbf{(ablated) mutant} $\textit{x}_\textsc{b}$ of \textit{x}.
The top-1 prediction label $y$ made by $f$ with $\textit{x}_\textsc{b}$ as input (denoted by $f_1(\textit{x}_\textsc{b})$) is counted as one \textbf{vote} for $y$.
The number of votes received by $y$ from all such mutants of \textit{x} represents 
the \textbf{vote confidence} for $y$, defined as $\mathbf{v_y(\textit{x})}=|\{\textsc{b}\in \mathbb{B}\mid f_1(\textit{x}_\textsc{b})=y\land \textit{x}_\textsc{b}=\textit{x}\odot\textsc{b}\}|$.
Fig.~\ref{fig:voting_2} illustrates the common prediction mechanism in voting-based defenders \cite{levine2020randomized,salman2022certified,li2022vip, chen2022towards, metzen2021efficient, zhang2020clipped, zhou2023majority}.




Def. \ref{def:cert_recovery_t1} captures the concept of certified recovery for the top-1/top-$k$ predictions.
A limitation of certified recovery for the top-1 prediction scenario is that the certified accuracy may be (inherently) low for some tasks. E.g.,  in PatchGuard's experiment \cite{xiang2021patchguard}, only 47.6\% 
of all these ImageNet samples whose top-1 predictions are correct can be certified by PatchGuard, leaving an estimate of 52.4\% in the input space open for attack.

\begin{Definition}[Certified Recovery for Top-1/Top-$k$ Predictions]\label{def:cert_recovery_t1}
\label{def:cert_recovery_tk}
A defender $R = \langle g, c\rangle$ certifies a benign sample $\textit{x}$ with true label $y_0$ as \textbf{\textit{k}-certified samples
}
if the conditions  
$[\forall \textit{x}' \in \mathbb{A}(\textit{x}),y_0\in g^k(\textit{x}') \land
y_0\in g^k(\textit{x}) 
]$
hold.
It is \textbf{1-certified} if $k=1$.
\end{Definition}

\begin{figure}[b] 
\centering
\includegraphics[width=\linewidth]{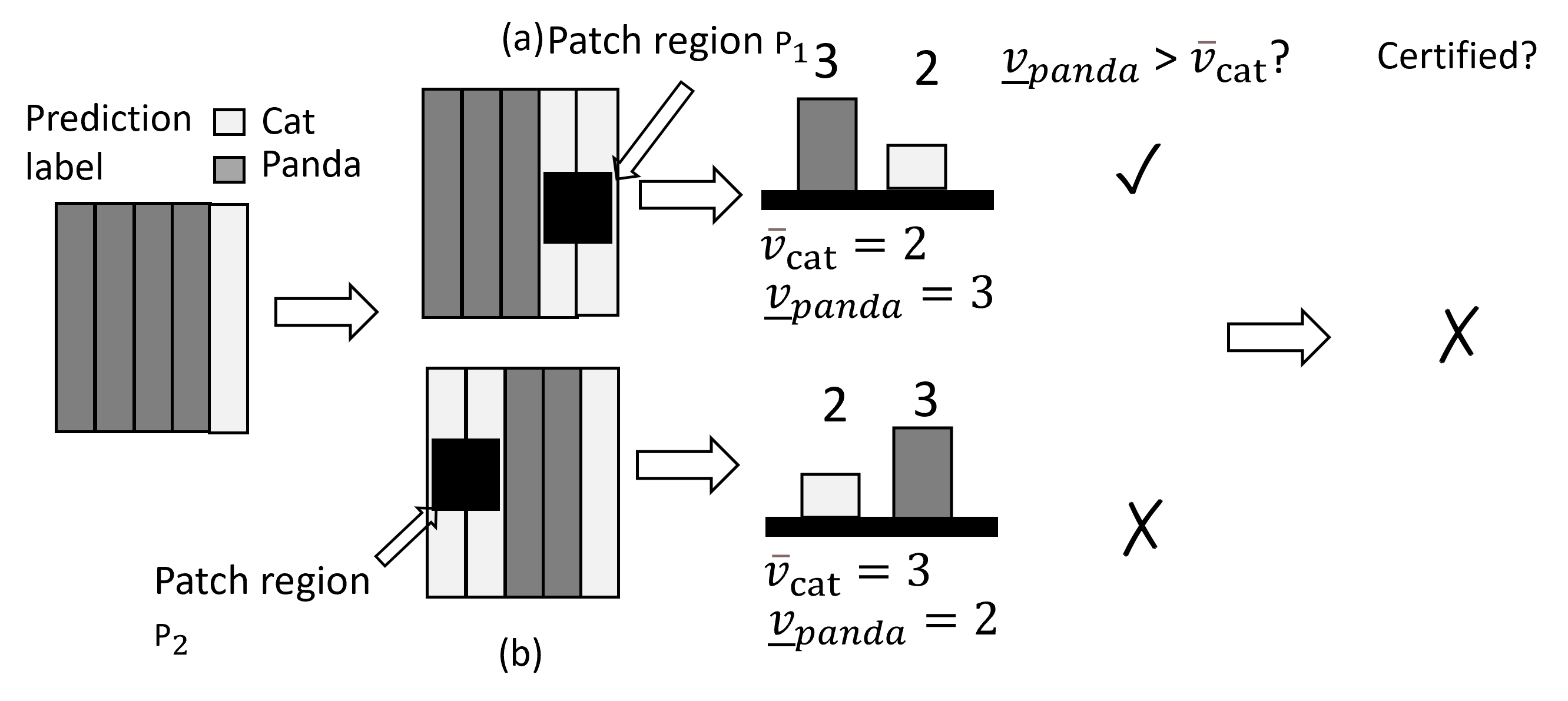}
\captionsetup{font={small}}
\caption{Certification for the 
top-1 prediction by Strategy 2. The set of 5 ablated mutants with vote confidence $v_\text{panda}$ = 4 and $v_\text{cat}$ = 1 is checked against every patch region.
In the upper scenario, the patch region $\textsc{p}_1$ overlaps with the rightmost two mutants.
It can change their labels to \textit{Cat} in the worst case. Since $\underline{v}^{\textsc{p}_1}_\text{panda} = 3 > 2 = \overline{v}^{\textsc{p}_1}_\text{cat}$, the check is passed. In the lower scenario, the patch region $\textsc{p}_2$ overlaps with the leftmost two mutants: $\underline{v}^{\textsc{p}_2}_\text{panda} = 2 \ngtr 3 = \overline{v}^{\textsc{p}_2}_\text{cat}$, which fails the check.
As not all checks are passed, the sample cannot be certified.
}
\label{fig:voting-pg}
\end{figure}


\subsubsection{Robustness certification for top-1 prediction}
\label{sec:top-1}
All existing voting-based defenders \cite{levine2020randomized,salman2022certified,li2022vip, chen2022towards,xiang2021patchguard,zhang2020clipped,zhou2023majority,metzen2021efficient} use the idea of lower/upper bounds for individual labels to compare such bounds for individual label pairs in their certification analyzers, by one of the two strategies below.

%

Let $\Delta$ 
be the maximum number of ablation regions overlapped with a patch region and $y_0$ be the true label of a sample \textit{x}.

Strategy 1 \cite{levine2020randomized,salman2022certified,li2022vip, chen2022towards}
considers $v_{y} - \Delta$ and $v_{y}+\Delta$ as
the lower and upper bounds of the vote confidence $v_{y}$ for each label $y \in \mathcal{Y}$, which
excludes and includes the maximum number of attacker-controlled votes, respectively.
It verifies 
whether the lower bound of the votes for $y_0$ is higher than the upper bound of the votes for each other label $y$ where $y \neq y_0$.
Existing work (e.g.,  \cite{li2022vip}) has proven that \textit{x} is a 1-certified sample if the condition $[{v}_{y_0}>\mathop{\max}_{y\neq y}{v}_{y}+2\Delta]$ holds.
%


Strategy 2
\cite{xiang2021patchguard,zhang2020clipped,zhou2023majority,metzen2021efficient} (see Fig.~\ref{fig:voting-pg}) computes these bounds more precisely than Strategy 1 
\cite{metzen2021efficient}.
%
%
Let $\textsc{O}$ denote the all-zero matrix \cite{zhou2024crosscert}.
Built upon \cite{levine2020randomized,salman2022certified,li2022vip}, it defines the
\textbf{upper bound} $\mathbf{\overline{v}_{y}^\textsc{p}(\textit{x})}$ of the vote confidence ${v}_{y}$ for a label $y$ per patch region \textsc{p} 
as the number of mutants
 that overlap with $\textsc{p}$  \emph{plus} the number of remaining mutants that vote for $y$, i.e.,
 $\overline{v}_{y}^\textsc{p}(x) = |\{
 \textsc{b}\in \mathbb{B} \mid 
\textsc{b}\odot\textsc{p}\neq\textsc{O} \land
\textit{x}_\textsc{b}=\textit{x}\odot\textsc{b}\}|
+ |\{
 \textsc{b}\in \mathbb{B} \mid
 \textsc{b}\odot\textsc{p} = \textsc{O} \land
 f_1(\textit{x}_\textsc{b})=y
\}|.
$
 The \textbf{lower bound}  $\mathbf{\underline{v}_{y}^\textsc{p}(\textit{x})}$ 
 of ${v}_{y}$ is defined as the number of mutants that vote for $y$ and do not overlap with $\textsc{p}$, i.e.,
$\underline{v}_{y}^\textsc{p} = |\{\textsc{b}\in \mathbb{B}\mid\textit{x}_\textsc{b}=\textit{x}\odot\textsc{b}\land \textsc{b}\odot\textsc{p}=\textsc{O}\land f_1(\textit{x}_\textsc{b})=y\}|$.
Prior works (CBN \cite{zhang2020clipped}, PatchGuard \cite{xiang2021patchguard} and BagCert \cite{metzen2021efficient}) prove that 
 \textit{x} is 1-certified
 if the condition  $[\forall \textsc{p}\in\mathbb{P},y' \in \mathcal{Y} \land y'\neq y_0 \land \underline{v}_{y_0}^\textsc{p}>\overline{v}_{y'}^\textsc{p}]$ holds, i.e., for all patch regions, the lower bound for $y_0$ exceeds the respective upper bound for every other label.
 %


PatchGuard \cite{xiang2021patchguard} 
further proposes and applies a masking strategy 
when calculating the vote confidence for each label $y\in\mathcal{Y}$.
Due to the complexity, we invite readers to read the formulation in the original paper \cite{xiang2021patchguard}. 
The general idea of PatchGuard's masking strategy is to find a specific set of consecutive ablated regions (one for the certification phase and another for the operation phase, denoted by $\mathbb{B}^{\text{cer}}_y$ and $\mathbb{B}^{\text{ope}}_y$, respectively)
to discount the votes to each given label $y \in \mathcal{Y}$ by dropping some mutants when computing the votes for each patch region \textsc{p}.
It 
computes  
the vote confidence for $y$ as 
${v}_{y}(x)=|\{\textsc{b}\in \mathbb{B}-\mathbb{B}^{\text{ope}}_y\mid f_1(\textit{x}_\textsc{b})=y\land\textit{x}_\textsc{b}=\textit{x}\odot\textsc{b}\}|$ in the operation phase, and
computes the upper bounds for $y$ with respect to this \textsc{p} as $\overline{v}_{y}^\textsc{p}(x)=|\{\textsc{b}\in \mathbb{B}\mid f_1(\textit{x}_\textsc{b})=y\land\textit{x}_\textsc{b}=\textit{x}\odot\textsc{b}\land\textsc{p}\odot\textsc{b}\neq\textsc{O}\}|$ and the lower bounds as $\underline{v}_{y}^\textsc{p}(x)=|\{\textsc{b}\in \mathbb{B}-\mathbb{B}^{\text{cer}}_y\mid f_1(\textit{x}_\textsc{b})=y\land\textit{x}_\textsc{b}=\textit{x}\odot\textsc{b}\land\textsc{p}\odot\textsc{b}\neq\textsc{O}\}|$ in the certification phase. All other parts are the same as Strategy 2.
With unavoidable drops of some unaffected (benign) mutants, this heuristic strategy further limits the effect of the adversarial patch to change the votes of mutants.
For instance,  
a 1.5\% gain in certified top-1 accuracy for 
PatchGuard using DRS \cite{levine2020randomized} (PG-DS \cite{xiang2021patchguard}),
is observed on CIFAR10 
if a ``proper'' patch region size can be used in the operation phase. 
However, other authors \cite{salman2022certified} have commented that the above guarantee from such a strategy is ``\emph{brittle}''. It is challenging to determine a ``proper'' patch region size for the operation phase. 


\subsubsection{Robustness certification for top-\textit{k} predictions}\label{sec: pre_top_k}


 

 PatchGuard \cite{xiang2021patchguard} and CBN \cite{zhang2020clipped} have generalized their certification analyses to certify $k$-certified samples.
 Specifically, Strategy 2 is slightly modified to 
 verify whether 
 $\mathbf{\overline{v}_{y'}^\textsc{p}(\textit{x})}$ 
for $y'$  is \emph{inclusively} greater than 
$\underline{v}_{y_0}^\textsc{p}(\textit{x})$ 
for $y_0$ for each \textsc{p}.
Let the number of such labels satisfying this modified condition be $k^\textsc{p}$.
If $k^\textsc{p} < k$ for some common constant $k$ across all $\textsc{p} \in \mathbb{P}$, \textbf{\textit{x} is \textit{k}-certified} by such a defender.
 Thm. \ref{thm:old_tk} from \cite{zhang2020clipped} captures this result.
Note that despite of PatchGuard 
using a masking strategy, the certification analyzers of PatchGuard and CBN
both are built on Thm. \ref{thm:old_tk}, i.e., their certification analyzers $c$ are commonly defined as $c(\textit{x},k)=[\forall \textsc{p}\in\mathbb{P},|\{y' \in \mathcal{Y} \mid y' \neq y_0 \land \overline{v}_{y'}^\textsc{p}(\textit{x})\geq \underline{v}_{y_0}^\textsc{p}(\textit{x})   \}| < k]$.
Fig. \ref{fig:voting-topk-old} depicts their analyses.






\begin{thm}[Certified recovery for top-$k$ predictions \cite{zhang2020clipped}]\label{thm:old_tk}
Suppose $[\forall \textsc{p}\in\mathbb{P},|\{y' \in \mathcal{Y} \mid y' \neq y_0 \land \overline{v}_{y'}^\textsc{p}(\textit{x})\geq \underline{v}_{y_0}^\textsc{p}(\textit{x})   \}| < k]$ holds. 
Then, $[\forall \textit{x}'\in\mathbb{A}(\textit{x}), y_0\in g^k(\textit{x}')]$ (see Def.~\ref{def:cert_recovery_t1}) holds.

\end{thm}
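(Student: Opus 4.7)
The plan is to proceed by contrapositive/contradiction: fix an arbitrary malicious sample $\textit{x}'\in\mathbb{A}(\textit{x})$, recover the patch region $\textsc{p}^{*}\in\mathbb{P}$ that witnesses $\textit{x}'\in\mathbb{A}(\textit{x})$ (so $\textit{x}$ and $\textit{x}'$ agree on every pixel outside $\textsc{p}^{*}$), and then \emph{transport} the hypothesis, which is phrased in terms of bounds on $\textit{x}$, into the actual vote counts $v_y(\textit{x}')$ on $\textit{x}'$.

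First I would establish the two key ``transport inequalities'' linking $\textit{x}$-bounds to $\textit{x}'$-counts:
\begin{align*}
v_{y_0}(\textit{x}') &\;\geq\; \underline{v}_{y_0}^{\textsc{p}^{*}}(\textit{x}), \\
v_{y'}(\textit{x}') &\;\leq\; \overline{v}_{y'}^{\textsc{p}^{*}}(\textit{x}) \quad\text{for every } y'\in\mathcal{Y}.
\end{align*}
Both follow from the observation that any ablated region $\textsc{b}$ with $\textsc{b}\odot\textsc{p}^{*}=\textsc{O}$ satisfies $\textit{x}'\odot\textsc{b}=\textit{x}\odot\textsc{b}$ (the mutant only sees pixels outside $\textsc{p}^{*}$, where the two samples coincide), so $f_1(\textit{x}'_{\textsc{b}})=f_1(\textit{x}_{\textsc{b}})$. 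The lower-bound inequality then counts only such non-overlapping mutants voting for $y_0$, which is exactly $\underline{v}_{y_0}^{\textsc{p}^{*}}(\textit{x})$. The upper-bound inequality splits the mutants of $\textit{x}'$ voting for $y'$ into those overlapping $\textsc{p}^{*}$ (bounded trivially by the total overlap count) and those not overlapping (which, by the identity above, coincide with mutants of $\textit{x}$ voting for $y'$), matching the definition of $\overline{v}_{y'}^{\textsc{p}^{*}}(\textit{x})$.

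Next, I would argue by contradiction: suppose $y_0\notin g^{k}(\textit{x}')$. By the ranking convention for $g$ (descending vote confidence, with ties conservatively unresolved), there must exist at least $k$ distinct labels $y'_{1},\ldots,y'_{k}$, all different from $y_0$, such that $v_{y'_{i}}(\textit{x}')\geq v_{y_0}(\textit{x}')$ for each $i$. Chaining with the two transport inequalities yields, for each such $y'_{i}$,
\[
\overline{v}_{y'_{i}}^{\textsc{p}^{*}}(\textit{x}) \;\geq\; v_{y'_{i}}(\textit{x}') \;\geq\; v_{y_0}(\textit{x}') \;\geq\; \underline{v}_{y_0}^{\textsc{p}^{*}}(\textit{x}).
\]
Hence $|\{y'\neq y_0 : \overline{v}_{y'}^{\textsc{p}^{*}}(\textit{x})\geq\underline{v}_{y_0}^{\textsc{p}^{*}}(\textit{x})\}|\geq k$, contradicting the hypothesis at the specific patch region $\textsc{p}^{*}\in\mathbb{P}$. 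Therefore $y_0\in g^{k}(\textit{x}')$, and since $\textit{x}'$ was arbitrary in $\mathbb{A}(\textit{x})$, the conclusion follows.

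\textbf{Expected obstacle.} The routine part is the set-counting to derive the two transport inequalities. The subtler point is the tie-breaking step: I must be careful that ``$y_0\notin g^{k}(\textit{x}')$'' really does yield $k$ labels with $v_{y'_{i}}(\textit{x}')\geq v_{y_0}(\textit{x}')$ (not strictly greater) under the conservative tie rule stated in Section 2.1, because otherwise the final chain would need a strict inequality that is not available from the hypothesis (which uses $\geq$). Aligning the direction of the inequality in the hypothesis with the worst-case tie semantics is the one place where the argument could slip, and it is where I would place the most care.
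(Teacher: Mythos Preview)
Your argument is correct and is exactly the standard route for this result. Note, however, that the present paper does \emph{not} supply its own proof of Theorem~\ref{thm:old_tk}: it is stated as a cited result from \cite{zhang2020clipped} (see Section~2.3.3, ``Thm.~\ref{thm:old_tk} from \cite{zhang2020clipped} captures this result''), so there is no in-paper proof to compare against. Your reconstruction---fix $\textit{x}'$, recover its witnessing patch region $\textsc{p}^{*}$, establish the two transport inequalities $v_{y_0}(\textit{x}')\geq\underline{v}_{y_0}^{\textsc{p}^{*}}(\textit{x})$ and $v_{y'}(\textit{x}')\leq\overline{v}_{y'}^{\textsc{p}^{*}}(\textit{x})$ via the fact that non-overlapping mutants are identical for $\textit{x}$ and $\textit{x}'$, then derive a contradiction from $y_0\notin g^{k}(\textit{x}')$---is precisely the argument behind the cited theorem and would be accepted as its proof. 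Your flagged obstacle about tie-breaking is also handled correctly: the paper's conservative convention (Section~3.3) that $y_0$ is placed after all labels tying with it in the certification analysis is exactly what makes the non-strict $\geq$ in the hypothesis line up with ``$y_0\notin g^{k}(\textit{x}')$''.
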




\begin{figure*}[th] 
\centering
\includegraphics[width=1\linewidth]{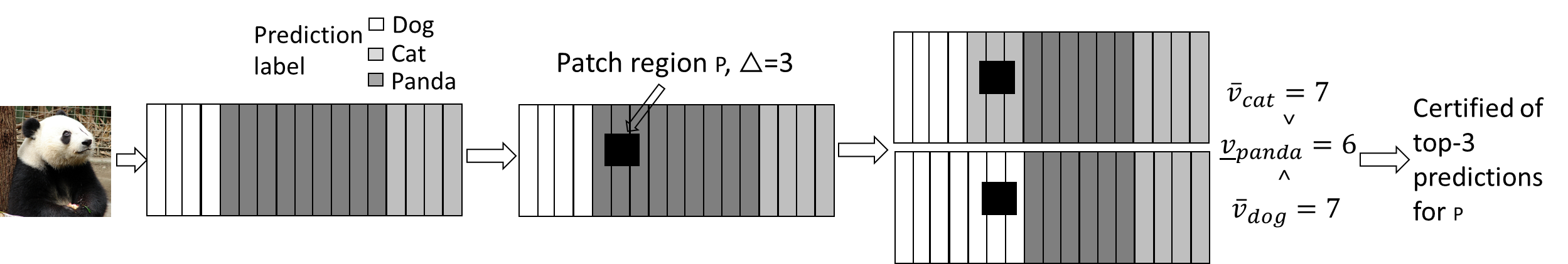}
\captionsetup{font={small}}
\caption{Illustration of PG$_\spadesuit$/CBN \cite{zhang2020clipped} for top-$k$ predictions to certify 
a sample \textit{x} with the true label  \textit{Panda}. The defender produces 17 ablated mutants for \textit{x} (4, 4, and 9 mutants predicted to \textit{Cat}, \textit{Dog}, and \textit{Panda}, respectively. The differences between the true label and other labels is noticeable.) 
followed by analyzing with each patch region \textsc{p} (black square), where the one overlapping with 3  mutants predicted to \textit{Panda} is shown.
An attacker may alter the labels of these 3 mutants to all \textit{Dog} (the upper scenario), all \textit{Cat} (the lower scenario), or some other combinations.
The two illustrated scenarios result in 
$\underline{v}^{\textsc{p}}_\text{panda}(\textit{x}) = 6 < 7 = \overline{v}^{\textsc{p}}_\text{dog}(\textit{x})$ and
$\underline{v}^{\textsc{p}}_\text{panda}(\textit{x}) = 6 < 7 = \overline{v}^{\textsc{p}}_\text{cat}(\textit{x})$, respectively.
(Note that all other scenarios cannot produce vote confidence for \textit{Panda} higher than 6.)
The label \textit{Panda},  originally at the first position, in the worst case, drops to the third position according to Thm. \ref{thm:old_tk}.
Thus, the defender certifies \textit{x} as $k$-certified with $k =3$ after checking all patch regions. 
PatchGuard \cite{xiang2021patchguard} also certifies \textit{x} as $k$-certified with $k =3$ 
because the upper bounds of vote confidence for \textit{Cat} and \textit{Dog} are still both higher than the lower bound of vote confidence for \textit{Panda} after applying the masking strategy.
\textbf{Limitation:} At a closer look, in the scenario of $\overline{v}^{\textsc{p}}_\text{dog}(\textit{x}) = 7$,  the labels for the three mutants that overlaps with \textsc{p} cannot simultaneously be predicted to multiple labels: \textit{Cat} and \textit{Dog}.
It indicates that the scenarios for $\overline{v}^{\textsc{p}}_\text{cat}(\textit{x}) = 7$ and $\overline{v}^{\textsc{p}}_\text{dog}(\textit{x}) = 7$ mentioned above \textbf{cannot co-exist simultaneously}.
It implies that, ideally, we should have $\overline{v}^{\textsc{p}}_\text{cat}(\textit{x}) = 4$ instead of 7 if $\overline{v}^{\textsc{p}}_\text{dog}(\textit{x}) = 7$, and vice versa.
}
\label{fig:voting-topk-old}
\end{figure*}



\subsection{Motivation}\label{sec:Motivational}
Our work is motivated by using PatchGuard \cite{xiang2021patchguard} to certify the ImageNet validation set \cite{deng2009imagenet} for top-$k$ predictions (among 1000 classes).
We implement PatchGuard (referred to as PG) following PG-DS \cite{xiang2021patchguard} (which adopts Strategy 2 with the masking strategy) and replace the original RestNet base model with Vision Transformer (ViT, version ViT-B16-224) \cite{dosovitskiy2021an} to align with the practice in \cite{salman2022certified,li2022vip,zhou2024crosscert}. We ablate the masking strategy from PG to construct an ablated defender PG$_\spadesuit$, which makes
PG$_\spadesuit$ shares the same fundamental certification theory with CBN \cite{zhang2020clipped}.
Note that CBN is customized on BagNet, making it hard to directly replace its base model by ViT.
%
The experimental setup is the same as our main experiment (see Section 
4). Similar to \cite{patchcensor,li2022vip}, we vary the patch (region) size \cite{patchcensor,li2022vip} by increasing it from 16 to 112 pixels with a step size of 16 pixels and denote the series by $\langle m_i\mid i=1,2,...\rangle$ (e.g., $m_1 = 16$ and $m_2 = 32$).
We  collect the smallest  $k$  sufficient to certify each sample as $k$-certified (referred to as min$k$).

Fig.~\ref{fig:motivate} summarizes the results.
The average values of min$k$ for PG and PG$_\spadesuit$ are surprisingly high: their means are as high as min$k$ = {199} for PG and min$k$ = {333} for PG$_\spadesuit$ when $m_i$ = {16} (a small value), which acceleratedly increases as $i$ for $m_i$ increases and reaches 1000 (i.e., the guarantee becomes trivial) when $m_i$ reaches 96, where the trend of acceleration can be seen clearly from the plot on the right. 
Their median min$k$ values are 1 when  $m_i=32$ and grow quickly to become 1000 when $m_i = 48$ for PG$_\spadesuit$
and $m_i = 80$ for PG.
Meanwhile, in both defenders, 
the patch region size is linear to $\Delta$, and Strategy 2 also compares the lower/upper bounds with $\Delta$ linearly.
%
 It makes us wonder the following question. 
\vspace{-0.5ex}
\begin{center}
\fbox{
\begin{minipage}[t]{0.9\linewidth}
What are the fundamental limitations in existing defenders offering certified recovery protection for top-$k$ predictions?
\end{minipage}
}
\end{center} 
\vspace{-1ex}






\begin{figure}[t] 
\centering
\begin{subfigure}[t]{0.48\linewidth}
\includegraphics[width=\linewidth]{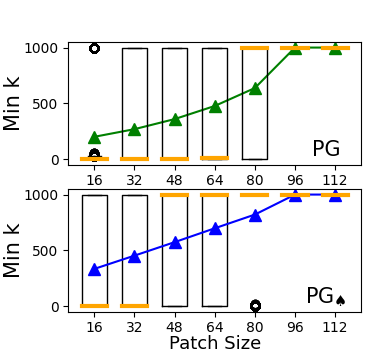}
\captionsetup{font={small}}
\end{subfigure}
\begin{subfigure}[t]{0.48\linewidth}
\centering
\includegraphics[width=\linewidth]{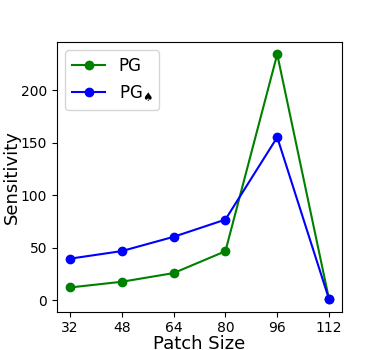}
\captionsetup{font={small}}
\end{subfigure}
\captionsetup{font={small}}
\caption{The two boxplots for PG and PG$_\spadesuit$ on certifying the 
ImageNet 
test dataset 
are shown on the left. 
The $y$-axis is min$k$. 
The $x$-axis is the patch size $m_i$.
The markers
stand for 
the mean values.
The chart on the right shows the sensitivity of PG and PG$_\spadesuit$ on ImageNet by varying the path size. 
Let $k_{m_i}$ be the mean min$k$ for size $m_i$ in the boxplots on the left. The $y$-axis is the sensitivity ratio $r_i$, defined as $\nicefrac{k_{m_{i}}}{k_{m_{i-1}}}$ and the $x$-axis is $m_i$.
The chart shows the acceleration of the mean min$k$ for each defender increases as $m_i$ increases. 
The min$k$ for every sample reaches 1000 (the total number of classes) when $m_i$ is 96 or 112, and $r_i$ drops to 1 when  $m_i$ is 112.}\label{fig:motivate}
\end{figure}

\section{CostCert}

This section presents CostCert
(see Fig. \ref{fig:voting} for overview).

\begin{figure*}[t] 
\centering
\includegraphics[width=1\linewidth]{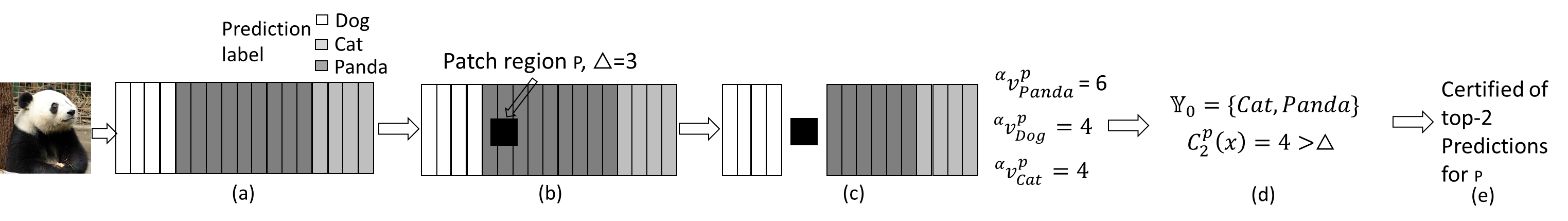}
\captionsetup{font={small}}
\caption{Illustration of CostCert. The diagram contains five sections, separated by arrows, from left to right, numbered as (a)--(e).
(a) CostCert produces 17 ablated mutants from the sample \textit{x} with the true label \textit{Panda}, 
where a patch region can overlap with at most three mutants (i.e., $\Delta$ = 3) --- 4, 9, and 4 mutants are predicted to the labels \textit{Dog}, \textit{Panda}, and \textit{Cat}, respectively.
The prediction output is $\langle \textit{Panda}, \textit{Dog}, \textit{Cat} \rangle$ sorted in descending order of vote confidence. Note the tie between \textit{Dog} and \textit{Cat} is resolved randomly.
(b) CostCert finds the three mutants overlapped with the patch region \textsc{p}. 
(c) CostCert removes these mutants regardless of their labels to compute the clean votes ($\alpha$-votes) at the clean level {$\alpha^\textsc{p}(\textit{x})$} for every label: 6, 4, and 4 for \textit{Panda}, \textit{Dog}, \textit{Cat}, respectively.
(d) CostCert computes the cost $C^\textsc{\footnotesize p}_2(\textit{x})$ required to move \textit{Panda} down from the first position to the third position in the prediction, which is 4, according to Eq.~\ref{eq:cost}, and finds $4 > 3 = \Delta$, which means that the label \textit{Panda} cannot be pushed to the third position with the available attack budget and keeps in top-2. 
(e) After checking all patch regions through the procedures illustrated in (b)--(d), CostCert finds that the condition $[\forall p \in \mathbb{P}, C^\textsc{\footnotesize p}_2(\textit{x}) > \Delta]$ holds.
Therefore, by Thm.~\ref{Thm:new_tk}, it certifies \textit{x} as $k$-certified with $k =2$. 
}
\label{fig:voting}
\end{figure*}

\subsection{Problem and Challenge}\label{sec:insight}
One of the most challenging tasks in robustness certification is that 
it is impractical to check
 all malicious samples around the benign ones.
 Certified defenders 
against patch attacks find common characteristics of all these malicious samples to certify their benign versions. 
If the maximum size of patches is given, voting-based certified defenders can calculate the number of mutants that can be affected by the attacker through geometric relations (Strategy 1 \cite{levine2020randomized,salman2022certified,li2022vip} 
or Strategy 2 \cite{xiang2021patchguard,zhang2020clipped,zhou2023majority,metzen2021efficient}) to perform their certification analyses.
Let us adopt the same practice to suppose $\Delta$ to be the maximum number of ablation regions (mutants) that can be overlapped by a patch region.

\textbf{Problem:}
A patch region that overlaps $q$ mutants (where $q \leq \Delta$) can only affect a total of $q$ votes among all labels because any mutant can only cast one vote no matter which labels this vote is assigned to. In the worst case, the patch region can overlap with $\Delta$ mutants. Thus, an attacker can only have at most $\Delta$ atomic votes as the \textbf{attack budget} to adjust the index positions of all labels in $g(x)$.
In contrast, for the 1000-class ImageNet classification model, for example, Strategies 1 and 2 presented in Section
2.3.2 assume an excessive attack budget of 999$\Delta$ votes. 
In Fig. \ref{fig:voting-topk-old}, existing certification analyzers conservatively assume that an attacker has an attack budget of 6 votes despite of $\Delta=3$ on a three-class classification task so as to assign 3 votes to \textit{Dog} and 3 votes to \textit{Cat}. A separate comparison between the lower and upper bounds of vote confidence for any two labels with the same attack budget will overestimate the overall power of the attack on the sample under certification.
Moreover, increasing the value of $\Delta$ amplifies the problem further.

\textbf{Challenge:}
To avoid the inflation of the attack budget, one straightforward solution is to check all possible cases of attacks to change the allocation of the votes under this attack budget $\Delta$.
Although we also know the attacker would not allocate the votes to the true label ($\Delta$ votes are changed to allocate to  $|\mathcal{Y}| - 1$ labels, and to simplify and reduce the cases, we regard these $\Delta$ votes are indistinguishable), 
there are still \emph{too many possible combinations} ($\binom{\Delta + |\mathcal{Y}| - 2}{|\mathcal{Y}| - 2}$ according to the ``balls and bars'' model in combinatorics \cite{flajolet2009analytic}) of the vote allocation for the attacker.
For example, in ImageNet when the patch size is 32 pixels (2\%), $\Delta=50$, $|\mathcal{Y}|=1000$,
where exhaustively enumerating all possible cases for the attacker to allocate the controlled votes is intractable.




\subsection{Insight and Its Rationale}
\label{sec:overview-CostCert}

Mutants casting their votes is a fundamental  concept in voting-based recovery. 
The number of votes received by a label with respect to a patch region \textsc{p} are the votes from these mutants that do not overlap (unaffected) and overlap (affected) with \textsc{p}, referred to as the \emph{clean vote} ($\alpha$-vote) and \emph{dirty vote} ($\beta$-vote) of the label, respectively. They add up to become the \emph{observed vote} of the label in the prediction output for any sample.

\textbf{Insight:}
Clean vote is the key.
When the required $k$ is given, by focusing on those clean votes (unaffected by the attacker), we can determine the minimum cost on top of them to push the true label out of the top-$k$ labels.
By comparing this minimum cost with the attack budget, CostCert is able to provide certification, which eliminates the need to enumerate all pairwise comparisons and check all combinations.

Specifically, we study the problem through vote allocation.
We can collect the clean votes of all labels for a benign sample $x$ with respect to a given patch region into a set and then
split the set into two subsets: one contains all these clean votes not smaller than the clean vote of the true label, and the other set contains the remaining clean votes, denoted by sets $D$
and $E$, respectively.
To push the true label further down by $k-n+1$ index positions when a patched version of $x$ is given, where $n = |D|$ and $k$ is a number given by a user (i.e., the true label will not be one of the top $k$ labels after a patch attack), we can further split $E$ into two subsets, denoted by sets $E_1$ and $E_2$, with $E_1$ containing $k-n+1$ number of clean votes.
To make the true label rank after all these labels whose votes are in either $D$ or $E_1$, at minimum, we need to increase the observed votes of all these labels whose votes are in $E_1$ to tie with the clean vote of the true label. 
In this way, we will have at least $k+1$ labels (including the true label) with votes not smaller than the clean vote of the true label, and 
thus can determine the number of additional votes required (referred to as the \emph{the tie cost}).
Since the clean vote of the true label is known, if $E_1$ contains the greatest sum of clean votes among all possible splits from $E$, the number of votes needed for the above increase in the observed votes will be the smallest for the patch region (referred to as the \textbf{\textit{smallest tie cost}}).

Computing $E_1$ with this greatest sum of clean votes, where we denote the resulting $E_1$ by $\mathbb{Y}_0$ and this greatest sum by $C_k^\textsc{\footnotesize p}(\textit{x})$, is efficient due to our design because clean votes of all labels are invariant across all possible patches of the same patch region, resulting in all labels being arranged in the same total order for the same patch region \emph{without over-approximation of the upper bounds of votes or double counting of dirty votes due to the variation of observed votes among possible patches of the same patch region.}%
\footnote{
In contrast, if observed votes are used instead, each vote in the sets corresponding to sets $D$ to $E_1$ will not be a number, rather, it would be a pair of upper bound and lower bound for each label and the total order property CostCert had achieved is degenerated into a weaker partial order, making the splitting of these sets result in many possible combinations of those subsets and many possible partial order relations among labels, failing existing defenders to apply our novel strategy with those above-mentioned problems if they aim to find the smallest cost.
}

Therefore, if the above smallest tie cost is strictly larger than $\Delta$ (the maximum number of votes controlled by  attackers) for all patch regions and a given $k$ value, i.e., check whether $[C_k^\textsc{p}(\textit{x}) > \Delta]$ for all \textsc{p} holds, the sample is $k$-certified.

We will illustrate CostCert after we formally introduce clean vote and dirty vote in the next section.

\subsection{Formal Definitions of Clean Vote and Dirty Vote}

Suppose $R = \langle g, c\rangle $ is a voting-based recovery defender with $f$ as the base model of $g$, and \textit{x} with its true label $y_0$ is a sample under certification. 
Like prior work \cite{salman2022certified,li2022vip},
to take a conservative view, 
CostCert assumes that if $y_0$ and some other labels receive the same vote confidence, it always places $y_0$ right after all these labels in $g(.)$ \textit{in the certification analysis} since the tie-breaking strategy of $g(.)$ may be unknown to CostCert.
We refer to the set of clean votes for all labels $\mathcal{Y}$ with respect to \textsc{p} as the {clean level} for \textsc{p}, denoted by $\alpha^\textsc{p}(\textit{x})$, 
and defined as 
$\alpha^\textsc{p}(\textit{x}) = \{
\sideset{^\alpha}{^\textsc{p}_{y}}{\mathop{v}}(\textit{x}) 
\}_{y \in \mathcal{Y}}$ where 
the \textbf{clean vote ($\alpha$-vote)} for $y$ is $\sideset{^\alpha}{^\textsc{p}_{y}}{\mathop{v}}(\textit{x})=|\{\textsc{b}\in \mathbb{B}\mid f_1(\textit{x}_\textsc{b})=y\land\textit{x}_\textsc{b}=\textit{x}\odot\textsc{b}\land \textsc{b}\odot\textsc{p}=\textsc{O}\}|$, where $\textsc{O}$ is an all-zero matrix. 
Similarly, the dirty vote for $y$ is defined as $\sideset{^\beta}{^\textsc{p}_{y}}{\mathop{v}}(\textit{x})=|\{\textsc{b}\in \mathbb{B}\mid f_1(\textit{x}_\textsc{b})=y\land\textit{x}_\textsc{b}=\textit{x}\odot\textsc{b}\land \textsc{b}\odot\textsc{p}\neq\textsc{O}\}|$.
(We note that
$^\beta{v}_y^\textsc{p}(.)$ is only a notional concept to ease readers to follow the overview of CostCert and ease our presentation on the proof of Thm.~\ref{Thm:new_tk}.%
)
%

In Fig.~\ref{fig:voting}(c), 
$\alpha^\textsc{p}(\textit{x}) = \{
\sideset{^\alpha}{^\textsc{p}_\textit{Dog}}{\mathop{v}}(\textit{x}),
\sideset{^\alpha}{^\textsc{p}_\textit{Panda}}{\mathop{v}}(\textit{x}),
\sideset{^\alpha}{^\textsc{p}_\textit{Cat}}{\mathop{v}}(\textit{x})
\}$, where 
$\sideset{^\alpha}{^\textsc{p}_\textit{Dog}}{\mathop{v}}(\textit{x}) = 4$,
$\sideset{^\alpha}{^\textsc{p}_\textit{Panda}}{\mathop{v}}(\textit{x}) = 6$, and
$\sideset{^\alpha}{^\textsc{p}_\textit{Cat}}{\mathop{v}}(\textit{x}) = 4$.
%
CostCert checks whether
$\sideset{^\alpha}{^\textsc{p}_\textit{Panda}}{\mathop{v}}(\textit{x}) \times |\mathbb{Y}_0| -(\sideset{^\alpha}{^\textsc{p}_\textit{Dog}}{\mathop{v}}(\textit{x}) + \sideset{^\alpha}{^\textsc{p}_\textit{Cat}}{\mathop{v}}(\textit{x})) 
= 6 \times 2 - (4 + 4) > \Delta = 3 $.

\subsection{Definition, Algorithm, and Theoretical Guarantee}
\label{sec:topkcertProof}
\vspace{-0.5ex}


In Section
3.2, we have presented that CostCert checks the condition $[C_k^\textsc{p}(\textit{x}) > \Delta]$ and finds a set denoted by $\mathbb{Y}_0$ that satisfies the condition $\Phi$. First, we denote $n$ as the conservative index position of the true label $y_0$ ranked by their votes, defined as $n=|\{y\in\mathcal{Y}\mid \sideset{^\alpha}{^\textsc{p}_{y}}{\mathop{v}}(\textit{x}) \geq\sideset{^\alpha}{^\textsc{p}_{y_0}}{\mathop{v}}(\textit{x}) \}|$.
When $n-1\geq k$ (i.e., $y_0$ is not a top-$k$ label), we define $C_k^\textsc{\footnotesize p}(\textit{x})=0$.
When $n-1<k$,
we formalize the term $C_k^\textsc{p}(\textit{x})$ as follows:
\vspace{-0.5ex}
\begin{equation}
\label{eq:cost}
C_k^\textsc{\footnotesize p}(\textit{x})
=(k-n+1) \sideset{^\alpha}{^\textsc{p}_{y_0}}{\mathop{v}}(\textit{x}) 
-
\max_{\mathbb{Y} \subseteq \mathcal{Y} \land 
\Phi(k, y_0, \textsc{p}, \mathbb{Y})}  
\sum_{y\in\mathbb{Y}} \sideset{^\alpha}{^\textsc{p}_{y}}{\mathop{v}}(\textit{x}) 
\end{equation}
where 
$
\Phi(k, y_0, \textsc{p}, \mathbb{Y}) := 
[|\mathbb{Y}|=k-n+1] \land [\forall y\in\mathbb{Y}, \sideset{^\alpha}{^\textsc{p}_{y}}{\mathop{v}}(\textit{x}) <\sideset{^\alpha}{^\textsc{p}_{y_0}}{\mathop{v}}(\textit{x}) ]
$, $k$ is the number of top labels to check whether a given sample \textit{x} subject to robustness certification can be $k$-certified with,
$y_0$ is \textit{x}'s true label, and
\textsc{p} is a patch region.
Also, conceptually, in Eq. \ref{eq:cost}, the scaling factor $k-n+1$ for the $\alpha$-votes of $y_0$ represents $|\mathbb{Y}_0|$, the second subterm of $C_k^\textsc{\footnotesize p}(\textit{x})$ 
computes the sum of $\alpha$-votes for the labels in $\mathbb{Y}_0$, and the $\max(.)$ function with its constraint implements $\Phi$. 

\textbf{CostCert} is $R = \langle g, c \rangle$, where $c(\textit{x},k)= [\forall \textsc{p}\in\mathbb{P}, C_k^\textsc{p}(\textit{x})>\Delta]$,
which returns \textit{True} to indicate that it certifies \textit{x} as $k$-certified, otherwise \textit{False}.
The classifier $g(\textit{x})$ is defined in Section 
2.1.%
\footnote{CostCert can also be implemented upon PG's classifier with the masking strategy. Although such a masking strategy may gain higher certified accuracy, it sacrifices more and more clean accuracy when the patch size gets larger (See Section
4.3.2). We leave the evaluation as a future work.}

Algorithm \ref{alg:top-k-new} presents how CostCert decides \textit{x} to be a $k$-certified sample.
For each patch region \textsc{p}, it computes the $\alpha$-votes for all labels (lines 2--6) and how many labels with not fewer $\alpha$-votes than the true label of $\textit{x}$ (line 7--8),
finds $C^\textsc{p}_k(x)$ (lines 9--11), and verifies the predicate in $c(\textit{x},k)$ for this \textsc{p} (line 12). 
Based on the results on $c(\textit{x},k)$ for all patch regions, 
it either applies Thm. \ref{Thm:new_tk} to infer  \textit{x} to be a $k$-certified sample (line 16) or refrains from drawing this conclusion (line 13).


\begin{algorithm}[t]
\SetAlgoLined
\DontPrintSemicolon
\caption{\text{CostCert} Certification}
\label{alg:top-k-new}
\SetKwInOut{KwIn}{Input}
\SetKwInOut{KwOut}{Output}
\KwIn{Sample $\textit{x}$, true label $y_0$, label space $\mathcal{Y}$, patch region set $\mathbb{P}$, ablation region set $\mathbb{B}$,
certified position of the true label $k$, attack budget 
$\Delta$}
\KwOut{%
Whether $\textit{x}$ is $k$-certified 
}
\ForEach{$\textsc{p}$ $\in$ $\mathbb{P}$
} 
{
$\mathbb{V}^\textsc{p}=\emptyset$
\tcp*{\text{\footnotesize map: label $\rightarrow$ clean vote}}
\ForEach{$y$ $\in$ $\mathcal{Y}$
} 
{
{\footnotesize $\sideset{^\alpha}{^\textsc{p}_{y}}{\mathop{v}}(\textit{x})=$ $|\{\textsc{b}\in \mathbb{B}\mid f_1(\textit{x}_\textsc{b})=y\land\textit{x}_\textsc{b}=\textit{x}\odot\textsc{b}\land \textsc{b}\odot\textsc{p}=\textsc{O}\}|$ 
}
\\
$\mathbb{V}^\textsc{p}[y]=$$\sideset{^\alpha}{^\textsc{p}_{y}}{\mathop{v}}(\textit{x})$
}

$\sideset{^\alpha}{^\textsc{p}_{y_0}}{\mathop{v}}=\mathbb{V}^\textsc{p}[y_0]$
\tcp*{\text{\footnotesize the clean vote of $y_0$}}
$n=|\{y\in\mathcal{Y}\mid \mathbb{V}^\textsc{p}[y]\geq\sideset{^\alpha}{^\textsc{p}_{y_0}}{\mathop{v}}\}|$

$A=(k-n+1)\times \sideset{^\alpha}{^\textsc{p}_{y_0}}{\mathop{v}}(\textit{x})$
\\
$B=
\max_{\mathbb{Y} \subseteq \mathcal{Y} \land 
\Phi(k, y_0, \textsc{p}, \mathbb{Y})} \{
\sum_{y\in\mathbb{Y}} \mathbb{V}^\textsc{p}[y]
\}$ 
\\
 $C^\textsc{p}_k = (n-1 < k? A - B$: 0)
 \tcp*{\text{\footnotesize smallest tie cost}}
 
\If {$C^\textsc{p}_k\le \Delta$ }
{
\Return{$\textit{False}$}
}
}

\Return{$\textit{True}$}
\end{algorithm}




Next, we present the soundness of CostCert following the ideas presented in Section
3.2:
At the clean level $\alpha^\textsc{p}$,
the cost in Eq.~\ref{eq:cost} is the lowest additional votes atop the $\alpha$-votes to make $k$ labels have votes inclusively larger than $y_0$ (Thm.~\ref{thm:thm_new_tk_lowest_cost}). 
Further, if the budget of an attacker is lower than this cost for all patch regions, such an attack is infeasible, which means $y_0$ will remain within the top-$k$ labels after the attack (Thm. \ref{Thm:new_tk}).

\begin{thm}[A discriminant through calculating the \textbf{smallest tie cost} for additional votes]\label{thm:thm_new_tk_lowest_cost}
Suppose that for a given patch region \textsc{p} and a given sample \textit{x},
there are less than $k+1$ labels (including $y_0$) having their $\alpha$ votes larger than or equal to that of the true label $y_0$ (i.e., $n=|\{y\in\mathcal{Y}\mid  \sideset{^\alpha}{^\textsc{p}_{y}}{\mathop{v}}(\textit{x})\geq \sideset{^\alpha}{^\textsc{p}_{y_0}}{\mathop{v}}(\textit{x})\}|<k+1$). 
Suppose an attacker wants to add
votes (aka cost) to some labels $\mathbb{Y} \subset \mathcal{Y}$ (referred to as $\delta$-votes, denoted as $\sideset{^\delta}{_{y}}{\mathop{v}}$ for $y \in \mathcal{Y}, \sideset{^\delta}{_{y}}{\mathop{v}}\in\mathbb{N}_0$) to their respective $\alpha$-votes. 
Note that $\sideset{^\delta}{_{y}}{\mathop{v}}$ is an arbitrary non-negative integer.
We use $\sideset{^\gamma}{_{y}}{\mathop{v}}$ to denote their adjusted votes (i.e., $\sideset{^\gamma}{_{y}}{\mathop{v}}=\sideset{^\alpha}{^\textsc{p}_{y}}{\mathop{v}}(\textit{x})+\sideset{^\delta}{_{y}}{\mathop{v}}$), referred to as $\gamma$-votes, for all $y$ in $\mathcal{Y}$.
If the sum of all these $\delta$-votes is smaller than $C_k^\textsc{p}(\textit{x})$, there must be no more than $k$ labels (including $y_0$) with their $\gamma$-votes larger than or equal to those of $y_0$, i.e., $[\sum\nolimits_{y \in \mathcal{Y}}\sideset{^\delta}{_{y}}{\mathop{v}}<C_k^\textsc{p}(\textit{x})]\implies[|\{y\in\mathcal{Y}\mid  \sideset{^\gamma}{_{y}}{\mathop{v}}\geq \sideset{^\gamma}{_{y_0}}{\mathop{v}}\}|\leq k]$.


\end{thm}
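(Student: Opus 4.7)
The plan is to prove Thm.~\ref{thm:thm_new_tk_lowest_cost} by contrapositive: assuming the $\gamma$-votes rank at least $k+1$ labels (including $y_0$) inclusively above $\sideset{^\gamma}{_{y_0}}{\mathop{v}}$, I will show $\sum_{y \in \mathcal{Y}} \sideset{^\delta}{_{y}}{\mathop{v}} \geq C_k^\textsc{p}(\textit{x})$. The driving algebraic fact is that any $y$ with $\sideset{^\gamma}{_{y}}{\mathop{v}} \geq \sideset{^\gamma}{_{y_0}}{\mathop{v}}$ satisfies $\sideset{^\delta}{_{y}}{\mathop{v}} \geq \sideset{^\alpha}{^\textsc{p}_{y_0}}{\mathop{v}}(\textit{x}) - \sideset{^\alpha}{^\textsc{p}_{y}}{\mathop{v}}(\textit{x}) + \sideset{^\delta}{_{y_0}}{\mathop{v}}$, a bound that is genuinely informative exactly when $y$ lies in $E := \{y \in \mathcal{Y} \setminus \{y_0\} : \sideset{^\alpha}{^\textsc{p}_{y}}{\mathop{v}}(\textit{x}) < \sideset{^\alpha}{^\textsc{p}_{y_0}}{\mathop{v}}(\textit{x})\}$. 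I therefore split $\mathcal{Y} \setminus \{y_0\}$ into this $E$ and its complement $D$, which has size $n - 1$ by hypothesis, and note that the hypothesis $n<k+1$ guarantees $k-n+1\geq 1$.

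Setting $S = \{y \in \mathcal{Y} : \sideset{^\gamma}{_{y}}{\mathop{v}} \geq \sideset{^\gamma}{_{y_0}}{\mathop{v}}\}$ with $|S| \geq k+1$, I pick any $k$-subset $T' \subseteq S \setminus \{y_0\}$. Since $|T' \cap D| \leq |D| = n-1$, pigeonhole gives $|T' \cap E| \geq k-n+1$, and I take $T$ to be any $(k-n+1)$-subset of $T' \cap E$. Summing the per-label bound over $T$ and retaining the $\sideset{^\delta}{_{y_0}}{\mathop{v}}$ contribution separately yields
\begin{equation*}
\sum_{y \in \mathcal{Y}} \sideset{^\delta}{_{y}}{\mathop{v}} \;\geq\; (k-n+1)\,\sideset{^\alpha}{^\textsc{p}_{y_0}}{\mathop{v}}(\textit{x}) \;-\; \sum_{y \in T} \sideset{^\alpha}{^\textsc{p}_{y}}{\mathop{v}}(\textit{x}) \;+\; (k-n+2)\,\sideset{^\delta}{_{y_0}}{\mathop{v}}.
\end{equation*}
By construction $T$ witnesses $\Phi(k, y_0, \textsc{p}, T)$, so $\sum_{y \in T} \sideset{^\alpha}{^\textsc{p}_{y}}{\mathop{v}}(\textit{x})$ is upper-bounded by the $\max$ in Eq.~\ref{eq:cost}; dropping the non-negative last term then recovers exactly $C_k^\textsc{p}(\textit{x})$ on the right, closing the contrapositive.

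The hard part will be handling the attacker's freedom to spend $\delta$-votes on $y_0$ itself, which shifts the effective threshold upward for every other label by $\sideset{^\delta}{_{y_0}}{\mathop{v}}$, so the witness $T$ must be chosen robustly against any such allocation. The pigeonhole step keeps $T$ entirely inside $E$ (so the per-label $\alpha$-vote slack is strictly positive and $\Phi$ is satisfied) and entirely inside $S$ (so the per-label $\delta$-vote lower bound applies), while the positive coefficient $k-n+2$ on $\sideset{^\delta}{_{y_0}}{\mathop{v}}$ in the summed inequality lets me simply discard it, absorbing the attacker's self-promotion into the slack without weakening the bound.
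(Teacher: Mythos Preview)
Your proof is correct and follows essentially the same contrapositive strategy as the paper: assume $|S|\geq k+1$, extract $k-n+1$ labels from $S$ whose $\alpha$-votes are strictly below $\sideset{^\alpha}{^\textsc{p}_{y_0}}{\mathop{v}}(\textit{x})$, lower-bound each of their $\delta$-votes by the corresponding $\alpha$-vote gap, and compare to Eq.~\ref{eq:cost}. The one noteworthy difference is that the paper simply asserts ``to maximize the interest of an attacker \ldots\ the attacker should not add any votes to $y_0$'' and thereafter sets $\sideset{^\delta}{_{y_0}}{\mathop{v}}=0$, whereas you carry $\sideset{^\delta}{_{y_0}}{\mathop{v}}$ through the inequality and discard the non-negative slack $(k-n+2)\,\sideset{^\delta}{_{y_0}}{\mathop{v}}$ at the end; this makes your argument slightly more self-contained, but it is a refinement of the same idea rather than a different route.
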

\begin{proof} 
Let $\mathbb{Y}$ denote the set of labels to receive more votes, i.e., $\mathbb{Y}=\{y\in\mathcal{Y}\mid \sideset{^\delta}{_{y}}{\mathop{v}}>0\}$.
For ease of presentation, let $W_\alpha$ denote the set $\{y\in\mathcal{Y}\mid  \sideset{^\alpha}{^\textsc{p}_{y}}{\mathop{v}}(\textit{x})\geq \sideset{^\alpha}{^\textsc{p}_{y_0}}{\mathop{v}}(\textit{x})\}$ and
$W_\gamma$ denote the set $\{y\in\mathcal{Y}\mid  \sideset{^\gamma}{_{y}}{\mathop{v}}\geq \sideset{^\gamma}{_{y_0}}{\mathop{v}}\}$.
To maximize the interest of an attacker (to minimize the sum of all $\delta$-votes), the attacker should not add any votes to $y_0$ (i.e., $\sideset{^\delta}{_{y_0}}{\mathop{v}}= 0$).
We already know
$n=|W_\alpha|$.
Then, by the definitions of $W_\alpha$ and $W_\gamma$, we know $W_\alpha \subset W_\gamma$. 
Therefore, 
the set $\mathbb{Y}$ should contain at least $k-n+1$ labels not in the set $W_\alpha$ (i.e., $[\forall y\in \mathbb{Y}, \sideset{^\alpha}{^\textsc{p}_{y}}{\mathop{v}}(\textit{x})-\sideset{^\alpha}{^\textsc{p}_{y_0}}{\mathop{v}}(\textit{x})<0]\land|\mathbb{Y}|\geq k-n+1$) to make $|W_\gamma| > k$ hold. 
For each $y \in \mathbb{Y}$,  $\sideset{^\delta}{_{y}}{\mathop{v}}$ will be smallest to achieve $\sideset{^\gamma}{_{y}}{\mathop{v}}\geq \sideset{^\gamma}{_{y_0}}{\mathop{v}}$ if $y$ and $y_0$ share the same $\gamma$-vote (i.e., $\sideset{^\gamma}{_{y_0}}{\mathop{v}}=\sideset{^\gamma}{_{y}}{\mathop{v}}$), 
which happens when $\sideset{^\delta}{_{y}}{\mathop{v}} = \sideset{^\alpha}{^\textsc{p}_{y_0}}{\mathop{v}}(\textit{x})-\sideset{^\alpha}{^\textsc{p}_{y}}{\mathop{v}}(\textit{x})$.
Thus, $
\sum_{y\in\mathbb{Y}}[\sideset{^\alpha}{^\textsc{p}_{y_0}}{\mathop{v}}(\textit{x})-\sideset{^\alpha}{^\textsc{p}_{y}}{\mathop{v}}(\textit{x})]$ is the {smallest tie cost} to make these $|\mathbb{Y}|$ labels satisfy the condition $\sideset{^\gamma}{_{y}}{\mathop{v}}(\textit{x})\geq \sideset{^\gamma}{_{y_0}}{\mathop{v}}(\textit{x})$ for all $y$ in $\mathbb{Y}$.
So, the term
$
\min_{\mathbb{Y} \subseteq \mathcal{Y} \land 
[\forall y\in \mathbb{Y}, \sideset{^\alpha}{^\textsc{p}_{y}}{\mathop{v}}(\textit{x})-\sideset{^\alpha}{^\textsc{p}_{y_0}}{\mathop{v}}(\textit{x})<0] \land
|\mathbb{Y}|\geq k-n+1}
\sum_{y\in\mathbb{Y}}[\sideset{^\alpha}{^\textsc{p}_{y_0}}{\mathop{v}}(\textit{x})-\sideset{^\alpha}{^\textsc{p}_{y}}{\mathop{v}}(\textit{x})]$
returns the {smallest tie cost} that makes $|W_\gamma| > k$ hold.
As this cost would not be further reduced as $|\mathbb{Y}|$ increases, 
this term will yield the same cost as
$
\min_{\mathbb{Y} \subseteq \mathcal{Y} \land 
[\forall y\in \mathbb{Y}, \sideset{^\alpha}{^\textsc{p}_{y}}{\mathop{v}}(\textit{x})-\sideset{^\alpha}{^\textsc{p}_{y_0}}{\mathop{v}}(\textit{x})<0] \land
|\mathbb{Y}|=k-n+1}
\sum_{y\in\mathbb{Y}}[\sideset{^\alpha}{^\textsc{p}_{y_0}}{\mathop{v}}(\textit{x})-\sideset{^\alpha}{^\textsc{p}_{y}}{\mathop{v}}(\textit{x})]$. 
In Eq.~\ref{eq:cost}, the term $(k-n+1)\sideset{^\alpha}{^\textsc{p}_{y_0}}{\mathop{v}}(\textit{x})$ is a constant according to the given conditions in Thm.~\ref{thm:thm_new_tk_lowest_cost}. Eq.~\ref{eq:cost} can thus also be rewritten as the summation of the individual differences between the $\alpha$-votes of $y_0$ and $y$: 
$
C_k^\textsc{p}(\textit{x})=\min_{\mathbb{Y} \subseteq \mathcal{Y} \land 
\Phi(k, y_0, \textsc{p}, \mathbb{Y})}
\sum_{y\in\mathbb{Y}}[\sideset{^\alpha}{^\textsc{p}_{y_0}}{\mathop{v}}(\textit{x})-\sideset{^\alpha}{^\textsc{p}_{y}}{\mathop{v}}(\textit{x})]$. 
By the definition of $\Phi(k, y_0, \textsc{p}, \mathbb{Y})$, $C_k^\textsc{p}(\textit{x})$ is the same as the abovementioned term that returns the {smallest tie cost},
which means that the attacker must spend  at least 
$C_k^\textsc{p}(\textit{x})$ additional votes to 
make more than $k$ labels (including $y_0$) with $\gamma$-votes higher than or equal to those of $y_0$
(i.e., 
$[|\{y\in\mathcal{Y}\mid  \sideset{^\gamma}{^\textsc{p}_{y}}{\mathop{v}}(\textit{x})\geq \sideset{^\gamma}{^\textsc{p}_{y_0}}{\mathop{v}}(\textit{x})\}|> k]
\implies
[\sum_{y\in\mathbb{Y}}\sideset{^\delta}{^\textsc{p}_{y}}{\mathop{v}}\geq C_k^\textsc{p}(\textit{x})]$).
(Note, in logic, the two logic formulas 
$[A \implies B]$ and $[\neg B \implies \neg A]$ are equivalent.)
We obtain $[\sum\nolimits_{y \in \mathcal{Y}}\sideset{^\delta}{_{y}}{\mathop{v}}<C_k^\textsc{p}(\textit{x})]\implies[|\{y\in\mathcal{Y}\mid  \sideset{^\gamma}{^\textsc{p}_{y}}{\mathop{v}}(\textit{x})\geq \sideset{^\gamma}{^\textsc{p}_{y_0}}{\mathop{v}}(\textit{x})\}|\leq k]$.
%
%
%
\end{proof}







So, given a patch region and a specific malicious sample $\textit{x}'$ of the sample \textit{x}, we can apply  Thm.~\ref{thm:thm_new_tk_lowest_cost} to infer whether there are more than $k$ labels (including $y_0$) with their votes ($v_y(\textit{x}')$) larger than or equal to those of $y_0$, resulting in Cor. \ref{cor:special_case}.

\begin{cor}[$\beta$-votes of a malicious sample are an instance of $\delta$-votes in Thm.~\ref{thm:thm_new_tk_lowest_cost}]\label{cor:special_case}
Given a patch region $\textsc{p}$, a sample $\textit{x}$, and a malicious sample $\textit{x}'\in \{\textit{x}' \mid \textit{x}'=(\textsc{J}-\textsc{p})\odot \textit{x}+\textsc{p}\odot \textit{x}'' \}$. With all the conditions specified in Thm.~\ref{thm:thm_new_tk_lowest_cost} given, by Thm.~\ref{thm:thm_new_tk_lowest_cost}, 
$[\sum\nolimits_{y \in \mathcal{Y}}\sideset{^\beta}{_{y}^\textsc{p}}{\mathop{v}}(\textit{x}')<C_k^\textsc{p}(\textit{x})]\implies[|\{y\in\mathcal{Y}\mid  \sideset{}{_{y}}{\mathop{v}}(\textit{x}')\geq \sideset{}{_{y_0}}{\mathop{v}}(\textit{x}')\}|\leq k]$.
\end{cor}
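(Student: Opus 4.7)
The plan is to verify that the $\beta$-votes of the malicious sample $\textit{x}'$ constitute a valid instance of the $\delta$-votes in Thm.~\ref{thm:thm_new_tk_lowest_cost}, after which the corollary reduces to a change of notation. The crux lies in a single invariance that I would establish first: $\sideset{^\alpha}{^\textsc{p}_{y}}{\mathop{v}}(\textit{x}')=\sideset{^\alpha}{^\textsc{p}_{y}}{\mathop{v}}(\textit{x})$ for every $y\in\mathcal{Y}$. To see this, note that any ablation region $\textsc{b}$ with $\textsc{b}\odot\textsc{p}=\textsc{O}$ makes the mutants $\textit{x}_\textsc{b}=\textit{x}\odot\textsc{b}$ and $\textit{x}'_\textsc{b}=\textit{x}'\odot\textsc{b}$ coincide entry by entry: since $\textit{x}'=(\textsc{J}-\textsc{p})\odot\textit{x}+\textsc{p}\odot\textit{x}''$, the two images agree outside the patch region, and the disjointness condition $\textsc{b}\odot\textsc{p}=\textsc{O}$ confines $\textit{x}'\odot\textsc{b}$ to exactly that region of agreement. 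Consequently $f_1(\textit{x}'_\textsc{b})=f_1(\textit{x}_\textsc{b})$, and the defining set of $\sideset{^\alpha}{^\textsc{p}_{y}}{\mathop{v}}(\cdot)$ is identical for $\textit{x}$ and $\textit{x}'$.

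Next I would partition $\mathbb{B}$ by the condition $\textsc{b}\odot\textsc{p}=\textsc{O}$ to decompose the observed vote as $v_y(\textit{x}')=\sideset{^\alpha}{^\textsc{p}_{y}}{\mathop{v}}(\textit{x}')+\sideset{^\beta}{^\textsc{p}_{y}}{\mathop{v}}(\textit{x}')$. Combining this with the invariance above yields $v_y(\textit{x}')=\sideset{^\alpha}{^\textsc{p}_{y}}{\mathop{v}}(\textit{x})+\sideset{^\beta}{^\textsc{p}_{y}}{\mathop{v}}(\textit{x}')$, which is precisely the $\gamma$-vote form of Thm.~\ref{thm:thm_new_tk_lowest_cost} under the assignment $\sideset{^\delta}{_{y}}{\mathop{v}}:=\sideset{^\beta}{^\textsc{p}_{y}}{\mathop{v}}(\textit{x}')$. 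This assignment is admissible because $\beta$-votes are cardinalities, hence non-negative integers, which is the only requirement Thm.~\ref{thm:thm_new_tk_lowest_cost} imposes on the $\delta$-votes; no further constraint (in particular, no prohibition on adding votes to $y_0$) is needed, since Thm.~\ref{thm:thm_new_tk_lowest_cost}'s statement allows $\sideset{^\delta}{_{y}}{\mathop{v}}\in\mathbb{N}_0$ arbitrarily per label.

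Finally, invoking Thm.~\ref{thm:thm_new_tk_lowest_cost} with this identification translates the hypothesis $\sum_{y\in\mathcal{Y}}\sideset{^\beta}{_{y}^\textsc{p}}{\mathop{v}}(\textit{x}')<C_k^\textsc{p}(\textit{x})$ into $\sum_{y\in\mathcal{Y}}\sideset{^\delta}{_{y}}{\mathop{v}}<C_k^\textsc{p}(\textit{x})$, and its conclusion $|\{y\in\mathcal{Y}\mid\sideset{^\gamma}{_{y}}{\mathop{v}}\geq\sideset{^\gamma}{_{y_0}}{\mathop{v}}\}|\leq k$ rewrites, via $\sideset{^\gamma}{_{y}}{\mathop{v}}=v_y(\textit{x}')$, into the stated bound. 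I expect the main obstacle to be precisely the clean-vote invariance of the first step: it is the sole point where the patch-structure constraint on $\textit{x}'$ is actually used, and without it the $\gamma$-votes of Thm.~\ref{thm:thm_new_tk_lowest_cost} would refer to $\alpha$-votes of $\textit{x}$ while the observed votes are computed on $\textit{x}'$, breaking the identification. Everything after that invariance is a routine symbol-by-symbol substitution.
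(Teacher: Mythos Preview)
Your proposal is correct and follows essentially the same route as the paper's proof: decompose $v_y(\textit{x}')$ into $\alpha$- and $\beta$-votes, replace $\sideset{^\alpha}{^\textsc{p}_{y}}{\mathop{v}}(\textit{x}')$ by $\sideset{^\alpha}{^\textsc{p}_{y}}{\mathop{v}}(\textit{x})$, then substitute $\sideset{^\delta}{_{y}}{\mathop{v}}:=\sideset{^\beta}{^\textsc{p}_{y}}{\mathop{v}}(\textit{x}')$ and invoke Thm.~\ref{thm:thm_new_tk_lowest_cost}. Your treatment of the clean-vote invariance is in fact more explicit than the paper's, which simply asserts ``attackers cannot modify $\alpha$-votes'' without spelling out the mutant-level argument you give.
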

\begin{proof}
Given $\textit{x}'$ with its \textsc{p}, 
by the definition of $\sideset{^\beta}{^\textsc{p}_{y}}{\mathop{v}}(\textit{x}')$, we know that $\sideset{^\beta}{^\textsc{p}_{y}}{\mathop{v}}(\textit{x}')$ is a non-negative integer for each $y\in\mathcal{Y}$, and $\sideset{}{_{y}}{\mathop{v}}(\textit{x}')=\sideset{^\alpha}{^\textsc{p}_{y}}{\mathop{v}}(\textit{x}')+\sideset{^\beta}{^\textsc{p}_{y}}{\mathop{v}}(\textit{x}')$, which can be rewritten as $\sideset{}{_{y}}{\mathop{v}}(\textit{x}')=\sideset{^\alpha}{^\textsc{p}_{y}}{\mathop{v}}(\textit{x})+\sideset{^\beta}{^\textsc{p}_{y}}{\mathop{v}}(\textit{x}')$  
since attackers cannot modify $\alpha$-votes.
In Thm.~\ref{thm:thm_new_tk_lowest_cost}, $\sideset{^\delta}{_{y}}{\mathop{v}}$ is an arbitrary non-negative integer subject to the condition specified in the theorem, and $\sideset{^\gamma}{_{y}}{\mathop{v}}$ = $\sideset{^\alpha}{^\textsc{p}_{y}}{\mathop{v}}(\textit{x})+\sideset{^\delta}{_{y}}{\mathop{v}}$. Therefore, after the symbol substitutions ($\sideset{^\gamma}{_{y}}{\mathop{v}}$ by $\sideset{}{_{y}}{\mathop{v}}(\textit{x}')$ and $\sideset{^\delta}{_{y}}{\mathop{v}}$
by $\sideset{^\beta}{^\textsc{p}_{y}}{\mathop{v}}(\textit{x}')$), our aim is to apply Thm.~\ref{thm:thm_new_tk_lowest_cost} to infer  
$[|\{y\in\mathcal{Y}\mid  \sideset{^\alpha}{^\textsc{p}_{y}}{\mathop{v}}(\textit{x})+\sideset{^\beta}{^\textsc{p}_{y}}{\mathop{v}}(\textit{x}')\geq \sideset{^\alpha}{^\textsc{p}_{y_0}}{\mathop{v}}(\textit{x})+\sideset{^\beta}{^\textsc{p}_{y_0}}{\mathop{v}}(\textit{x}')\}|
=
|\sideset{}{_{y}}{\mathop{v}}(\textit{x}')\geq \sideset{}{_{y_0}}{\mathop{v}}(\textit{x}')\}|
\leq
k
$.
Note that since we want to know the above-mentioned condition for
$\sideset{}{_{y}}{\mathop{v}}(\textit{x}')=\sideset{^\alpha}{^\textsc{p}_{y}}{\mathop{v}}(\textit{x})+\sideset{^\beta}{^\textsc{p}_{y}}{\mathop{v}}(\textit{x}')$, 
in the definition of $C_k^\textsc{p}(.)$ in Thm.~\ref{thm:thm_new_tk_lowest_cost}, 
the input is used for inputting to $\sideset{^\alpha}{^\textsc{p}_{y}}{\mathop{v}}(.)$, which is $\textit{x}$ (rather than $\textit{x}'$).
Thus, given both $\textit{x}'$ and $\textsc{p}$, 
if 
$[\sum\nolimits_{y \in \mathcal{Y}}\sideset{^\beta}{_{y}^\textsc{p}}{\mathop{v}}(\textit{x}')<C_k^\textsc{p}(\textit{x})]$ holds
, we can infer that
$
[|\{y\in\mathcal{Y}\mid  \sideset{}{_{y}}{\mathop{v}}(\textit{x}')\geq \sideset{}{_{y_0}}{\mathop{v}}(\textit{x}')\}|\leq k]$ holds.
(i.e., $[\sum\nolimits_{y \in \mathcal{Y}}\sideset{^\beta}{_{y}^\textsc{p}}{\mathop{v}}(\textit{x}')<C_k^\textsc{p}(\textit{x})]\implies[|\{y\in\mathcal{Y}\mid  \sideset{}{_{y}}{\mathop{v}}(\textit{x}')\geq \sideset{}{_{y_0}}{\mathop{v}}(\textit{x}')\}|\leq k]$).
\end{proof}
Checking all malicious samples of \textit{x} with the antecedent of the condition stated in Cor.~\ref{cor:special_case} is unlikely. 
 If we can imply this antecedent by a condition free from $x'$ and applicable to all malicious samples across all patch regions, we would still imply the consequence of that condition (see Thm. \ref{Thm:new_tk}).




\begin{thm}
[Certified Robustness for Top-$k$ Predictions with Analysis at Clean Level]\label{Thm:new_tk}
Suppose the maximum number of ablated regions that can overlap with any patch region $\textsc{p} \in \mathbb{P}$ (aka attack budget) is $\Delta$ (i.e., $\Delta=\max\{|\{\textsc{b}\in\mathbb{B}\mid \textsc{p} \in \mathbb{P} \land \textsc{p}\odot\textsc{b}\neq\textsc{O}\}|\}$). 
If the condition $[C_k^\textsc{p}(\textit{x})>\Delta]$ holds for all $\textsc{p}\in\mathbb{P}$ and the given conditions in Thm.~\ref{thm:thm_new_tk_lowest_cost} also hold,
then the condition $[\forall \textit{x}'\in\mathbb{A}(\textit{x}), y_0\in g^k(\textit{x}')]$ holds, i.e., \textbf{\textit{x} is \textit{k}-certified}.
\end{thm}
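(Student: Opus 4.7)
The plan is to reduce the universally quantified claim over malicious samples to a per-patch-region application of Cor.~\ref{cor:special_case}. First, I would fix an arbitrary $\textit{x}' \in \mathbb{A}(\textit{x})$ and, using the definition of $\mathbb{A}(\textit{x})$, pick the patch region $\textsc{p} \in \mathbb{P}$ under which $\textit{x}'$ was generated, so that $\textit{x}' = (\textsc{J}-\textsc{p})\odot \textit{x} + \textsc{p} \odot \textit{x}''$ for some $\textit{x}''$. This pins $\textit{x}'$ to a fixed clean level $\alpha^\textsc{p}(\textit{x})$ because pixels outside $\textsc{p}$ are untouched, and hence the $\alpha$-votes computed on $\textit{x}$ coincide with those that would be computed on $\textit{x}'$.

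Next, I would bound the total dirty votes for $\textit{x}'$ by $\Delta$. Because every ablated mutant casts exactly one top-1 vote, summing $\sideset{^\beta}{^\textsc{p}_y}{\mathop{v}}(\textit{x}')$ over $y \in \mathcal{Y}$ equals $|\{\textsc{b} \in \mathbb{B} \mid \textsc{b} \odot \textsc{p} \neq \textsc{O}\}|$, which is at most $\Delta$ by the definition of the attack budget in the theorem statement. Chaining this with the hypothesis $C_k^\textsc{p}(\textit{x}) > \Delta$ yields $\sum_{y \in \mathcal{Y}} \sideset{^\beta}{^\textsc{p}_y}{\mathop{v}}(\textit{x}') \leq \Delta < C_k^\textsc{p}(\textit{x})$, which is exactly the antecedent required by Cor.~\ref{cor:special_case}.

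Invoking Cor.~\ref{cor:special_case} then gives that at most $k$ labels (including $y_0$ itself) have observed votes $v_y(\textit{x}')$ at least as large as $v_{y_0}(\textit{x}')$. The final step is to translate this count into a statement about the ranked output $g^k(\textit{x}')$. Under the conservative tie-breaking convention introduced in Section~3.3 — whereby $y_0$ is placed strictly after every other label tied with it — the index position of $y_0$ in $g(\textit{x}')$ equals $|\{y \in \mathcal{Y} \mid v_y(\textit{x}') \geq v_{y_0}(\textit{x}')\}|$, which is at most $k$. Hence $y_0 \in g^k(\textit{x}')$; since $\textit{x}'$ was arbitrary, the universally quantified conclusion of the theorem follows, and $\textit{x}$ is $k$-certified.

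The main obstacle I anticipate is the bookkeeping around the $\alpha$/$\beta$ decomposition: one must verify that the $\alpha$-votes really are invariant across all $\textit{x}' \in \mathbb{A}(\textit{x})$ sharing the same $\textsc{p}$ (so that $C_k^\textsc{p}$, evaluated on $\textit{x}$ rather than $\textit{x}'$, is still the quantity Cor.~\ref{cor:special_case} wants), and that summing dirty votes over $\mathcal{Y}$ is an exact count of overlapping mutants rather than a loose upper bound. Both facts follow from the single-vote-per-mutant property together with the constraint that attackers cannot touch pixels outside $\textsc{p}$, but stating them carefully is what makes the chain of inequalities through Cor.~\ref{cor:special_case} sound without any double counting.
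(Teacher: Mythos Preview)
Your proposal is correct and follows essentially the same route as the paper: fix a malicious sample, identify its patch region, bound the total $\beta$-votes by $\Delta$ via the single-vote-per-mutant count, chain with $C_k^\textsc{p}(\textit{x})>\Delta$ to trigger Cor.~\ref{cor:special_case}, and read off $y_0\in g^k(\textit{x}')$ from the vote-ranking definition of $g$. The only cosmetic difference is order of quantifier instantiation (you fix $\textit{x}'$ then recover $\textsc{p}$, the paper fixes $\textsc{p}$ then ranges over the $\textit{x}'$ it generates), and your explicit remarks on $\alpha$-vote invariance and tie-breaking are exactly the bookkeeping the paper leaves implicit.
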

\begin{proof}
Suppose $\textsc{p}$ and \textit{x} are given. 
By Cor.~\ref{cor:special_case}, we know 
that for each given a malicious sample $\textit{x}'\in \{\textit{x}' \mid \textit{x}'=(\textsc{J}-\textsc{p})\odot \textit{x}+\textsc{p}\odot \textit{x}'' \}$, by Thm.~\ref{thm:thm_new_tk_lowest_cost}, 
$[\sum\nolimits_{y \in \mathcal{Y}}\sideset{^\beta}{_{y}^\textsc{p}}{\mathop{v}}(\textit{x}')<C_k^\textsc{p}(\textit{x})]\implies[|\{y\in\mathcal{Y}\mid  \sideset{}{_{y}}{\mathop{v}}(\textit{x}')\geq \sideset{}{_{y_0}}{\mathop{v}}(\textit{x}')\}|\leq k]$.
Given that $C_k^\textsc{p}(\textit{x})>\Delta$,
we have $C_k^\textsc{p}(\textit{x})>\Delta\geq |\{\textsc{b}\in\mathbb{B}\mid \textsc{p} \in \mathbb{P}
\land \textsc{p}\odot\textsc{b}\neq\textsc{O}\}|=\sum\nolimits_{y \in \mathcal{Y}}\sideset{^\beta}{_{y}^\textsc{p}}{\mathop{v}}(\textit{x}')$ 
for all $\textit{x}'\in \{\textit{x}' \mid \textit{x}'=(\textsc{J}-\textsc{p})\odot \textit{x}+\textsc{p}\odot \textit{x}'' \}$ by the definition of $\sideset{^\beta}{_{y}^\textsc{p}}{\mathop{v}}(\textit{x}')$. 
Therefore, for all $\textit{x}'\in \{\textit{x}' \mid \textit{x}'=(\textsc{J}-\textsc{p})\odot \textit{x}+\textsc{p}\odot \textit{x}'' \}$, 
we have $C_k^\textsc{p}(\textit{x})>\sum\nolimits_{y \in \mathcal{Y}}\sideset{^\beta}{_{y}^\textsc{p}}{\mathop{v}}(\textit{x}')$. 
Further, for all $\textit{x}'\in \{\textit{x}' \mid \textit{x}'=(\textsc{J}-\textsc{p})\odot \textit{x}+\textsc{p}\odot \textit{x}'' \}$, 
we obtain 
$[|\{y\in\mathcal{Y}\mid  \sideset{}{_{y}}{\mathop{v}}(\textit{x}')\geq \sideset{}{_{y_0}}{\mathop{v}}(\textit{x}')\}|\leq k]$ by Cor.~\ref{cor:special_case}.
Note that the condition $[C_k^\textsc{p}(\textit{x})>\Delta]$ holds for all $\textsc{p}\in\mathbb{P}$. By the definition of $\mathbb{A}(\textit{x})$, we know 
$[\forall \textit{x}'\in\mathbb{A}(\textit{x}), [|\{y\in\mathcal{Y}\mid  \sideset{}{_{y}}{\mathop{v}}(\textit{x}')\geq \sideset{}{_{y_0}}{\mathop{v}}(\textit{x}')\}|\leq k]$. 
Recalled that $g(.)$ ranks each label $y\in\mathcal{Y}$ by votes $v_y(.)$. Therefore, $\forall \textit{x}'\in\mathbb{A}(\textit{x})$, $y_0$ must contain in the top-$k$ labels (i.e., $\forall \textit{x}'\in\mathbb{A}(\textit{x}), y_0\in g^k(\textit{x}')$).
\end{proof}



\begin{Property}[%
Thm.~\ref{Thm:new_tk} provides a more precise analysis than 
Thm.~\ref{thm:old_tk}%
]
\label{pro: tighter}
A sample satisfying the antecedent stated in 
Thm.~\ref{thm:old_tk} (i.e., $[\forall \textsc{p}\in\mathbb{P},|\{y' \neq y_0\mid \overline{v}_{y'}^\textsc{p}(\textit{x})\geq \underline{v}_{y_0}^\textsc{p}(\textit{x})   \}| < k]$) will satisfy the antecedent stated in
Thm.~\ref{Thm:new_tk} 
(i.e., $[\forall \textsc{p}\in\mathbb{P}, C_k^\textsc{p}(x) >\Delta]$), but not vice versa.
\end{Property}


\begin{proof}
To prove the property, we need to prove $\text{\textcircled{1}} 
[\forall \textsc{p}\in\mathbb{P},|\{y' \neq y_0\mid \overline{v}_{y'}^\textsc{p}(\textit{x})\geq \underline{v}_{y_0}^\textsc{p}(\textit{x})   \}| < k] \Rightarrow [\forall \textsc{p}\in\mathbb{P}, C_k^\textsc{p}(\textit{x}) > \Delta]$.
$\text{\textcircled{2}} [\forall \textsc{p}\in\mathbb{P}, C_k^\textsc{p}(\textit{x}) > \Delta] \nRightarrow[\forall \textsc{p}\in\mathbb{P},|\{y' \neq y_0\mid \overline{v}_{y'}^\textsc{p}(\textit{x})\geq \underline{v}_{y_0}^\textsc{p}(\textit{x})   \}| < k]$.
We first prove $\text{\textcircled{1}}$.
Suppose a patch region \textsc{p} is given, and a given sample $x$ meets the antecedent in Thm.~\ref{thm:old_tk}.
Let $|\{y' \neq y_0\mid \overline{v}_{y'}^\textsc{p}(\textit{x})\geq \underline{v}_{y_0}^\textsc{p}(\textit{x})   \}|=m < k$.
By the definition of $\overline{v}_{y}(\textit{x}), \underline{v}_{y}(\textit{x}), \text{and }\sideset{^\alpha}{^\textsc{p}_{y}}{\mathop{v}}(\textit{x})$, 
we know $|\{y' \neq y_0\mid 
\sideset{^\alpha}{^\textsc{p}_{y'}}{\mathop{v}}(\textit{x}) + \Delta
\geq 
\sideset{^\alpha}{^\textsc{p}_{y_0}}{\mathop{v}}(\textit{x})   \}|=m <  k$,
which is 
$|\{y' \neq y_0\mid  
\sideset{^\alpha}{^\textsc{p}_{y_0}}{\mathop{v}}(\textit{x})   
-
\sideset{^\alpha}{^\textsc{p}_{y'}}{\mathop{v}}(\textit{x})
\leq
\Delta
\}
|=m < k$.
Recall $n=|\{y\in\mathcal{Y}\mid \sideset{^\alpha}{^\textsc{p}_{y}}{\mathop{v}}(\textit{x})\geq\sideset{^\alpha}{^\textsc{p}_{y_0}}{\mathop{v}}(\textit{x})\}|$.
\textcircled{1}(a) Suppose $n-1<k$.
We know 
$
C_k^\textsc{p}(\textit{x})=\min_{\mathbb{Y} \subseteq \mathcal{Y} \land 
\Phi(k, y_0, \textsc{p}, \mathbb{Y})}
\sum_{y\in\mathbb{Y}}[\sideset{^\alpha}{^\textsc{p}_{y_0}}{\mathop{v}}(\textit{x})-\sideset{^\alpha}{^\textsc{p}_{y}}{\mathop{v}}(\textit{x})]
,
$
where 
$
\Phi(k, y_0, \textsc{p}, \mathbb{Y}) := 
[|\mathbb{Y}|=k-n+1] \land [\forall y\in\mathbb{Y}, \sideset{^\alpha}{^\textsc{p}_{y}}{\mathop{v}}(\textit{x}) <\sideset{^\alpha}{^\textsc{p}_{y_0}}{\mathop{v}}(\textit{x}) ]
$. 
We can know that if the set $\mathbb{Y}$ contains any label $y''$ not in 
$\{y' \neq y_0\mid 
\sideset{^\alpha}{^\textsc{p}_{y_0}}{\mathop{v}}(\textit{x})   
-
\sideset{^\alpha}{^\textsc{p}_{y'}}{\mathop{v}}(\textit{x})
\leq
\Delta
\}
$ 
(which means $\sideset{^\alpha}{^\textsc{p}_{y_0}}{\mathop{v}}(\textit{x})   
-
\sideset{^\alpha}{^\textsc{p}_{y''}}{\mathop{v}}(\textit{x})
>
\Delta$)
,
then 
$C_k^\textsc{p}(x) > \Delta$ must hold.
To make it not happen, there should be $m-n+1$ labels remaining for $\mathbb{Y}$ (since each label except $y_0$ in $\{y\in\mathcal{Y}\mid \sideset{^\alpha}{^\textsc{p}_{y}}{\mathop{v}}(\textit{x})\geq\sideset{^\alpha}{^\textsc{p}_{y_0}}{\mathop{v}}(\textit{x})\}$ must be in $\{y' \neq y_0\mid 
\sideset{^\alpha}{^\textsc{p}_{y_0}}{\mathop{v}}(\textit{x})   
-
\sideset{^\alpha}{^\textsc{p}_{y'}}{\mathop{v}}(\textit{x})
\leq
\Delta
\}$. 
Labels in $\{y\in\mathcal{Y}\mid \sideset{^\alpha}{^\textsc{p}_{y}}{\mathop{v}}(\textit{x})\geq\sideset{^\alpha}{^\textsc{p}_{y_0}}{\mathop{v}}(\textit{x})\}$ cannot be in $\mathbb{Y}$ since they violate  $\Phi(k, y_0, \textsc{p}, \mathbb{Y})$). 
Also, we know $|\mathbb{Y}|=k-n+1$. Since $(k-n+1)-(m-n+1)=k-m>0$, we know $\mathbb{Y}$ cannot avoid including at least one label $y''$ that  
$\sideset{^\alpha}{^\textsc{p}_{y_0}}{\mathop{v}}(\textit{x})   
-
\sideset{^\alpha}{^\textsc{p}_{y''}}{\mathop{v}}(\textit{x})
>
\Delta$, making 
$C_k^\textsc{p}(\textit{x}) > \Delta$.
\textcircled{1}(b) Suppose $n-1\geq k$. Then $C_k^\textsc{p}(x)=0$ and we know 
$
|\{y\in\mathcal{Y}\mid \sideset{^\alpha}{^\textsc{p}_{y}}{\mathop{v}}(\textit{x})\geq\sideset{^\alpha}{^\textsc{p}_{y_0}}{\mathop{v}}(\textit{x})\}|\geq k+1
$.
Recall that by the definition of $\overline{v}_{y}(\textit{x})$, $\underline{v}_{y}(\textit{x})$, and $\sideset{^\alpha}{^\textsc{p}_{y}}{\mathop{v}}(\textit{x})$, 
we know $|\{y' \neq y_0\mid 
\sideset{^\alpha}{^\textsc{p}_{y'}}{\mathop{v}}(\textit{x}) + \Delta
\geq 
\sideset{^\alpha}{^\textsc{p}_{y_0}}{\mathop{v}}(\textit{x})   \}|=m < k$ from the given condition, contradicting to 
$
|\{y\in\mathcal{Y}\mid \sideset{^\alpha}{^\textsc{p}_{y}}{\mathop{v}}(\textit{x})\geq\sideset{^\alpha}{^\textsc{p}_{y_0}}{\mathop{v}}(\textit{x})\}|\geq k+1
$, which means this situation does not exist. 
Combining the two subcases, we know $C_k^\textsc{p}(\textit{x}) > \Delta$ holds for the given  $\textsc{p}$.
We also know the condition $|\{y' \neq y_0\mid \overline{v}_{y'}^\textsc{p}(\textit{x})\geq \underline{v}_{y_0}^\textsc{p}(\textit{x})   \}|=m < k$ holds for all $\textsc{p}\in\mathbb{P}$, therefore, $C_k^\textsc{p}(\textit{x}) > \Delta$ holds for all $\textsc{p}\in\mathbb{P}$.
To prove $\text{\textcircled{2}}$, we prove by counter-examples that meet
$[\forall \textsc{p}\in\mathbb{P}, C_k^\textsc{p} > \Delta]$
but not meets
$[\forall \textsc{p}\in\mathbb{P},|\{y' \neq y_0\mid \overline{v}_{y'}^\textsc{p}(\textit{x})\geq \underline{v}_{y_0}^\textsc{p}(\textit{x})   \}| < k]$.
Suppose the abovementioned sample $\textit{x}$ in \textcircled{1}(a) gets one more label $y_\delta$ that satisfies the condition $[\forall \textsc{p}\in\mathbb{P}, \overline{v}_{y_\delta}^\textsc{p}(\textit{x})\geq \underline{v}_{y_0}^\textsc{p}(\textit{x})\land 
\sideset{^\alpha}{^\textsc{p}_{y_\delta}}{\mathop{v}}(\textit{x})<\sideset{^\alpha}{^\textsc{p}_{y_0}}{\mathop{v}}(\textit{x})
]$, and others keep the same. 
Then, we will have $[\forall \textsc{p}\in\mathbb{P},|\{y' \neq y_0\mid \overline{v}_{y'}^\textsc{p}(\textit{x})\geq \underline{v}_{y_0}^\textsc{p}(\textit{x})   \}|=k]$,
violating the antecedent stated in Thm.~\ref{thm:old_tk}.
Let's denote $C_k^\textsc{p}(\textit{x})$ in \textcircled{1}(a) as $C^\textsc{p}$.
Then if the condition $[\forall \textsc{p}\in\mathbb{P}, \sideset{^\alpha}{^\textsc{p}_{y_\delta}}{\mathop{v}}(\textit{x}) < \sideset{^\alpha}{^\textsc{p}_{y_0}}{\mathop{v}}(\textit{x})+C^\textsc{p}-\Delta]$ holds, 
this sample can still satisfy the antecedent $[\forall \textsc{p}\in\mathbb{P}, C_k^\textsc{p}(\textit{x})
=
\min_{\mathbb{Y} \subseteq \mathcal{Y} \land 
\Phi(k, y_0, \textsc{p}, \mathbb{Y})}
\sum_{y\in\mathbb{Y}}[\sideset{^\alpha}{^\textsc{p}_{y_0}}{\mathop{v}}(\textit{x})-\sideset{^\alpha}{^\textsc{p}_{y}}{\mathop{v}}(\textit{x})]
\geq 
C^\textsc{p}+(\sideset{^\alpha}{^\textsc{p}_{y_0}}{\mathop{v}}(\textit{x}) -\sideset{^\alpha}{^\textsc{p}_{y_\delta}}{\mathop{v}}(\textit{x}) ) > \Delta]$ in Thm.~\ref{Thm:new_tk}.
\end{proof}
 


%
Intuitively, Property \ref{pro: tighter} shows that Thm.~\ref{Thm:new_tk} 
 \textbf{squeezes out \emph{unnecessary and inflated}} pairwise comparisons of upper/lower bounds for the top-$k$ labels in Thm.~\ref{thm:old_tk}. So, CostCert can certify more samples than a defender based on Strategy 2 for the same $k$ in general, providing a more precise certification analysis.
\subsection{Discussion}
CostCert can be further adapted to multi-class classification scenarios with multi-class labels, allowing $k$ guesses, where a sample may have more than one true label. 
For instance, if the safety requirement is that all true labels must be included in the top-$k$ prediction labels,
the user can apply the existing theory of CostCert (Thm.~\ref{Thm:new_tk}) by replacing $y_0$ in Eq.~\ref{eq:cost} with each of these true labels, one at a time; if all true labels can satisfy the antecedent in Thm.~\ref{Thm:new_tk}, CostCert will guarantee all these true labels within the top-$k$ prediction labels after patch attacks. 
We leave the formal formulation, proof, and evaluation of this extension as future work.

\section{Evaluation}\label{sec:eva}

\subsection{Research Questions}
We aim to answer the following research questions.

\begin{description} 
\item[RQ1] How does CostCert
 perform compared with baselines in terms of certified accuracy for top-$k$ predictions?
\item[RQ2] How does CostCert perform under strong adversaries in terms of patch size?
\end{description}

\subsection{Experimental Setup}
\subsubsection{Environment}

We implement and conduct the experiments on Python 3.8 and 
PyTorch 2.0.1. 
We train the base deep learning models and generate the mutants with their predicted labels on a GPU cluster equipped with V100 GPUs. We then conduct the data analysis of classifiers and certification analyzers 
on 
a Ubuntu 20.04 machine equipped with four 2080Ti GPUs.
\subsubsection{Datasets and Models}
We adopt three popular datasets, including the 1000-class ImageNet \cite{deng2009imagenet}, 100-class CIFAR100 \cite{krizhevsky2009learning}, 43-class GTSRB \cite{Stallkamp-IJCNN-2011} as our evaluation datasets, which are also adopted in previous works on patch robustness certification \cite{xiang2021patchguard,xiang2022patchcleanser, patchcensor,zhou2024crosscert,salman2022certified,zhang2020clipped}
. 
They contain 1.3M, 50k, and 39209 training images and 50k, 10k, and 12630 images for validation (ImageNet) and test (CIFAR100 and GTSRB), respectively. All samples are resized to $224\times 224$ pixels following \cite{salman2022certified}.
We adopt Vision Transformer (ViT) \cite{dosovitskiy2021an} as the base model of each defender, which achieves state-of-the-art voting-based recovery performances \cite{salman2022certified, chen2022towards} and high clean accuracy across many tasks.
We obtain the architecture and pre-trained weights (ViT-b16-224 with 86.6M parameters) from \emph{timm} \cite{vit-b16-224}. 
We adopt the mutant generation algorithm of column ablation from \cite{xiang2021patchguard}.
We adopt 19 as the size of column ablation in ViT, following the main experiments in \cite{salman2022certified}.

\subsubsection{Baselines}
\label{sec:baselines}

We compare CostCert (\textbf{CC}) with \textbf{CBN} \cite{zhang2020clipped}, PatchGuard (\textbf{PG}) \cite{xiang2021patchguard} and \textbf{PG$_\spadesuit$}.~\footnote{The works presented in \cite{jia2020certified} and \cite{jia2022almost} are unsuitable as our baselines. 
Unlike CC
, they do not offer deterministic certifications.
Ref \cite{jia2020certified} is for the L2-norm attacks.
Ref \cite{jia2022almost} can only defend against an attack up to 5 pixels.
}
We have reviewed PG's and CBN's certifications in Sections 
2.3.2 and 2.3.3.
PG$_\spadesuit$ is PG  with PG's masking strategy ablated,
 which shares the same certification theory with {CBN}.
We note that the code for the top-$k$ certification analyzers of PG and CBN are not open-sourced.
For fair comparisons,  we implement PG (PG-DS in \cite{xiang2021patchguard}) and PG$_\spadesuit$ with their top-$k$ certification analyzers on our infrastructure. All defenders share the same base models. Like the experiments in prior works \cite{zhou2023majority,zhou2024crosscert,li2022vip,salman2022certified,xiang2021patchguard,chen2022towards,levine2020randomized,patchcensor,zhang2020clipped}, a patch region is a square.
We follow \cite{levine2020randomized} to compute $\Delta$ and generate the patch region sets like \cite{xiang2021patchguard,zhou2023majority}.
We implement Alg.~\ref{alg:top-k-new} in CC with optimization, which returns the smallest $k$ sufficient to certify a sample as $k$-certified by checking Eq.~\ref{eq:cost} with $\mathbb{Y}$ that includes labels in descending order of $\alpha$-votes one at a time.
Like the evaluation reported in \cite{xiang2021patchguard,li2022vip,salman2022certified,zhou2024crosscert,patchcensor,metzen2021efficient,xiang2022patchcleanser,chen2022towards}, we extract the results from the original papers of PG \cite{xiang2021patchguard} and CBN \cite{zhang2020clipped} for comparisons whenever available, denoted as \textbf{PG$_\star$} and \textbf{CBN$_\star$}, respectively.


\subsubsection{Metrics}
\label{sec:metrics}
Let $\textit{x}$ be a sample with the true label $y_0$ in a test dataset $\mathbb{D}$ and $R = \langle g(x), c(x,k) \rangle$ be a voting-based recovery defender.

The sample $\textit{x}$ is called top-$k$ correct iff 
$y_0\in g^k(\textit{x})$. 
For a given $k$, \textbf{clean accuracy} is the fraction of $\mathbb{D}$ that are top-$k$ correct, defined as
$acc_{clean}(k)=\nicefrac{\mid\{\textit{x}\in\mathbb{D}\mid y_0\in g^k(\textit{x})\}\mid}{\mid\mathbb{D}\mid}$, and
%
\textbf{certified accuracy} is the ratio of $k$-certified samples to $|\mathbb{D}|$, defined as
$acc_{cert}(k)=\nicefrac{\mid\{\textit{x}\in\mathbb{D}\mid c(\textit{x},k)=\textit{True}\land y_0\in g^k(\textit{x})\}\mid}{\mid\mathbb{D}\mid}$.
Note that if  $\textit{x}$ is top-$k$ correct ($k$-certified), it is also top-$(k+1)$ correct ($(k+1)$-certified). 
The smallest $k$ sufficient to certify $x$ as $k$-certified is defined as 
\textbf{min\textit{k}}$(x)$ = $\min \{k | c(x, k)=\textit{True}\}$.

\subsubsection{Experimental Procedure}
%
For RQ1, following \cite{xiang2022patchcleanser,xiang2021patchguard,levine2020randomized,salman2022certified,zhou2023majority,chen2022towards},
we adopt the frequently-used patch sizes of 1\%  (measured as a patch region with side length of 23 pixels), 2\% (32 pixels), and 3\% (39 pixels) for ImageNet and 0.4\% (14 pixels) and 2.4\% (35 pixels) for the other two datasets. 
For RQ2, following the setting in RQ3 of \cite{patchcensor}, we vary the patch size by increasing it from 16 to 112 pixels with a step size of 16 pixels \cite{li2022vip,patchcensor}.
For each dataset, we run the fine-tuning script that follows the hyperparameters in \cite{salman2022certified} to obtain a fine-tuned base model, on which
we run the evaluation script to get the inference results (i.e., votes) of each sample $\textit{x}$.
We run the certification analyzer script on the inference results to compute $g(\textit{x})$ and min$k(\textit{x})$.

\begin{table}[t]
\caption{Results on ImageNet from Top-1 to Top-5}\label{tab:ImageNet}
\resizebox{\linewidth}{!}{
\begin{tabular}{|cc|ccc|ccc|}
\hline
\multicolumn{2}{|c|}{Accuracy (in \%)}                               & \multicolumn{3}{c|}{clean}                                                              & \multicolumn{3}{c|}{certified}                                                           \\ \hline
\multicolumn{2}{|c|}{Patch\,size}                              & \multicolumn{1}{c|}{1\%}             & \multicolumn{1}{c|}{2\%}             & 3\%             & \multicolumn{1}{c|}{1\%}             & \multicolumn{1}{c|}{2\%}             & 3\%             \\ \hline
\multicolumn{1}{|c|}{\multirow{5}{*}{{\rotatebox{90}{Top-1}}}} & CBN$_\star$ \cite{zhang2020clipped}           & \multicolumn{1}{c|}{62.0}          & \multicolumn{1}{c|}{62.0}          & 62.0          & \multicolumn{1}{c|}{n/a} & \multicolumn{1}{c|}{n/a} & n/a           \\ \cline{2-8} 
\multicolumn{1}{|c|}{}                       & PG$_\star$ \cite{xiang2021patchguard}            & \multicolumn{1}{c|}{55.1}          & \multicolumn{1}{c|}{54.6}          & 54.1          & \multicolumn{1}{c|}{32.2}          & \multicolumn{1}{c|}{26.0}          & 19.7          \\ \cline{2-8} 
\multicolumn{1}{|c|}{}                       & PG$_\spadesuit$ & \multicolumn{1}{c|}{\textbf{70.3}} & \multicolumn{1}{c|}{\textbf{70.3}} & \textbf{70.3} & \multicolumn{1}{c|}{46.1}          & \multicolumn{1}{c|}{41.0}          & 37.0          \\ \cline{2-8} 
\multicolumn{1}{|c|}{}                       & PG        & \multicolumn{1}{c|}{69.4}          & \multicolumn{1}{c|}{68.9}          & 68.5          & \multicolumn{1}{c|}{\textbf{48.1}} & \multicolumn{1}{c|}{\textbf{43.5}} & \textbf{40.0} \\ \cline{2-8} 
\multicolumn{1}{|c|}{}                       & CC            & \multicolumn{1}{c|}{\textbf{70.3}} & \multicolumn{1}{c|}{\textbf{70.3}} & \textbf{70.3} & \multicolumn{1}{c|}{46.1}          & \multicolumn{1}{c|}{41.0}          & 37.0          \\ \hline
\multicolumn{1}{|c|}{\multirow{4}{*}{{\rotatebox{90}{Top-2}}}} & PG$_\star$ \cite{xiang2021patchguard}            & \multicolumn{1}{c|}{65.9}          & \multicolumn{1}{c|}{65.5}          & 64.9          & \multicolumn{1}{c|}{48.3}          & \multicolumn{1}{c|}{43.8}          & 38.2          \\ \cline{2-8} 
\multicolumn{1}{|c|}{}                       & PG$_\spadesuit$ & \multicolumn{1}{c|}{\textbf{80.2}} & \multicolumn{1}{c|}{\textbf{80.2}} & \textbf{80.2} & \multicolumn{1}{c|}{53.1}          & \multicolumn{1}{c|}{47.2}          & 42.8          \\ \cline{2-8} 
\multicolumn{1}{|c|}{}                       & PG        & \multicolumn{1}{c|}{79.0}          & \multicolumn{1}{c|}{78.4}          & 77.8          & \multicolumn{1}{c|}{\textbf{57.2}} & \multicolumn{1}{c|}{52.5}          & 48.7          \\ \cline{2-8} 
\multicolumn{1}{|c|}{}                       & CC            & \multicolumn{1}{c|}{\textbf{80.2}} & \multicolumn{1}{c|}{\textbf{80.2}} & \textbf{80.2} & \multicolumn{1}{c|}{\textbf{57.2}} & \multicolumn{1}{c|}{\textbf{52.8}} & \textbf{49.5} \\ \hline
\multicolumn{1}{|c|}{\multirow{4}{*}{{\rotatebox{90}{Top-3}}}} & PG$_\star$ \cite{xiang2021patchguard}            & \multicolumn{1}{c|}{71.3}          & \multicolumn{1}{c|}{70.8}          & 70.2          & \multicolumn{1}{c|}{52.2}          & \multicolumn{1}{c|}{48.7}          & 44.1          \\ \cline{2-8} 
\multicolumn{1}{|c|}{}                       & PG$_\spadesuit$ & \multicolumn{1}{c|}{\textbf{84.2}} & \multicolumn{1}{c|}{\textbf{84.2}} & \textbf{84.2} & \multicolumn{1}{c|}{56.0}          & \multicolumn{1}{c|}{49.8}          & 45.1          \\ \cline{2-8} 
\multicolumn{1}{|c|}{}                       & PG        & \multicolumn{1}{c|}{82.7}          & \multicolumn{1}{c|}{82.0}          & 81.3          & \multicolumn{1}{c|}{61.6}          & \multicolumn{1}{c|}{57.1}          & 53.3          \\ \cline{2-8} 
\multicolumn{1}{|c|}{}                       & CC            & \multicolumn{1}{c|}{\textbf{84.2}} & \multicolumn{1}{c|}{\textbf{84.2}} & \textbf{84.2} & \multicolumn{1}{c|}{\textbf{63.2}} & \multicolumn{1}{c|}{\textbf{59.4}} & \textbf{56.5} \\ \hline
\multicolumn{1}{|c|}{\multirow{4}{*}{{\rotatebox{90}{Top-4}}}} & PG$_\star$ \cite{xiang2021patchguard}            & \multicolumn{1}{c|}{74.6}          & \multicolumn{1}{c|}{74.2}          & 73.7          & \multicolumn{1}{c|}{53.9}          & \multicolumn{1}{c|}{51.3}          & 47.4          \\ \cline{2-8} 
\multicolumn{1}{|c|}{}                       & PG$_\spadesuit$ & \multicolumn{1}{c|}{\textbf{86.3}} & \multicolumn{1}{c|}{\textbf{86.3}} & \textbf{86.3} & \multicolumn{1}{c|}{57.5}          & \multicolumn{1}{c|}{51.2}          & 46.4          \\ \cline{2-8} 
\multicolumn{1}{|c|}{}                       & PG        & \multicolumn{1}{c|}{84.7}          & \multicolumn{1}{c|}{84.0}          & 83.3          & \multicolumn{1}{c|}{64.5}          & \multicolumn{1}{c|}{60.0}          & 56.2          \\ \cline{2-8} 
\multicolumn{1}{|c|}{}                       & CC            & \multicolumn{1}{c|}{\textbf{86.3}} & \multicolumn{1}{c|}{\textbf{86.3}} & \textbf{86.3} & \multicolumn{1}{c|}{\textbf{66.9}} & \multicolumn{1}{c|}{\textbf{63.6}} & \textbf{60.8} \\ \hline
\multicolumn{1}{|c|}{\multirow{5}{*}{{\rotatebox{90}{Top-5}}}} & CBN$_\star$ \cite{zhang2020clipped}           & \multicolumn{1}{c|}{83.6}          & \multicolumn{1}{c|}{83.6}          & 83.6          & \multicolumn{1}{c|}{20.0}          & \multicolumn{1}{c|}{10.0}          & 5.0          \\ \cline{2-8} 
\multicolumn{1}{|c|}{}                       & PG$_\star$ \cite{xiang2021patchguard}            & \multicolumn{1}{c|}{77.0}          & \multicolumn{1}{c|}{76.6}          & 76.2          & \multicolumn{1}{c|}{54.8}          & \multicolumn{1}{c|}{52.9}          & 49.6          \\ \cline{2-8} 
\multicolumn{1}{|c|}{}                       & PG$_\spadesuit$ & \multicolumn{1}{c|}{\textbf{87.7}} & \multicolumn{1}{c|}{\textbf{87.7}} & \textbf{87.7} & \multicolumn{1}{c|}{58.5}          & \multicolumn{1}{c|}{52.1}          & 47.2          \\ \cline{2-8} 
\multicolumn{1}{|c|}{}                       & PG        & \multicolumn{1}{c|}{86.0}          & \multicolumn{1}{c|}{85.1}          & 84.4          & \multicolumn{1}{c|}{66.5}          & \multicolumn{1}{c|}{62.2}          & 58.3          \\ \cline{2-8} 
\multicolumn{1}{|c|}{}                       & CC            & \multicolumn{1}{c|}{\textbf{87.7}} & \multicolumn{1}{c|}{\textbf{87.7}} & \textbf{87.7} & \multicolumn{1}{c|}{\textbf{69.6}} & \multicolumn{1}{c|}{\textbf{66.4}} & \textbf{64.0} \\ \hline
\end{tabular}
}
\end{table}

\begin{table}[t]
{
\caption{Top-1 to Top-5 Results on GTSRB and CIFAR100}\label{tab:GTSRB&CIFAR100}
\setlength{\tabcolsep}{0.2em}
\resizebox{\linewidth}{!}{
\begin{tabular}{|cc|cccc|cccc|}
\hline
\multicolumn{2}{|c|}{Dataset}                                & \multicolumn{4}{c|}{GTSRB}                                                                                                   & \multicolumn{4}{c|}{CIFAR100}                                                                                                \\ \hline
\multicolumn{2}{|c|}{Patchsize}                              & \multicolumn{2}{c|}{0.4\%}                                                & \multicolumn{2}{c|}{2.4\%}                           & \multicolumn{2}{c|}{0.4\%}                                                & \multicolumn{2}{c|}{2.4\%}                           \\ \hline
\multicolumn{2}{|c|}{Accuracy (in \%)}                               & \multicolumn{1}{c|}{clean}         & \multicolumn{1}{c|}{certified}     & \multicolumn{1}{c|}{clean}         & certified     & \multicolumn{1}{c|}{clean}         & \multicolumn{1}{c|}{certified}     & \multicolumn{1}{c|}{clean}         & certified     \\ \hline
\multicolumn{1}{|c|}{\multirow{3}{*}{\rotatebox{90}{Top-1}}} & PG$_\spadesuit$ & \multicolumn{1}{c|}{\textbf{73.0}} & \multicolumn{1}{c|}{47.3}          & \multicolumn{1}{c|}{\textbf{73.0}} & 30.1          & \multicolumn{1}{c|}{\textbf{72.7}} & \multicolumn{1}{c|}{53.7}          & \multicolumn{1}{c|}{\textbf{72.7}} & 40.7          \\ \cline{2-10} 
\multicolumn{1}{|c|}{}                       & PG        & \multicolumn{1}{c|}{72.3}          & \multicolumn{1}{c|}{\textbf{47.7}} & \multicolumn{1}{c|}{68.9}          & \textbf{31.5} & \multicolumn{1}{c|}{72.4}          & \multicolumn{1}{c|}{\textbf{54.5}} & \multicolumn{1}{c|}{71.4}          & \textbf{43.3} \\ \cline{2-10} 
\multicolumn{1}{|c|}{}                       & CC            & \multicolumn{1}{c|}{\textbf{73.0}} & \multicolumn{1}{c|}{47.3}          & \multicolumn{1}{c|}{\textbf{73.0}} & 30.1          & \multicolumn{1}{c|}{\textbf{72.7}} & \multicolumn{1}{c|}{53.7}          & \multicolumn{1}{c|}{\textbf{72.7}} & 40.7          \\ \hline
\multicolumn{1}{|c|}{\multirow{3}{*}{\rotatebox{90}{Top-2}}} & PG$_\spadesuit$ & \multicolumn{1}{c|}{\textbf{85.8}} & \multicolumn{1}{c|}{59.4}          & \multicolumn{1}{c|}{\textbf{85.8}} & 38.6          & \multicolumn{1}{c|}{\textbf{82.3}} & \multicolumn{1}{c|}{61.2}          & \multicolumn{1}{c|}{\textbf{82.3}} & 47.5          \\ \cline{2-10} 
\multicolumn{1}{|c|}{}                       & PG        & \multicolumn{1}{c|}{84.3}          & \multicolumn{1}{c|}{60.3}          & \multicolumn{1}{c|}{79.4}          & 41.8          & \multicolumn{1}{c|}{82.1}          & \multicolumn{1}{c|}{63.5}          & \multicolumn{1}{c|}{81.0}          & 52.5          \\ \cline{2-10} 
\multicolumn{1}{|c|}{}                       & CC            & \multicolumn{1}{c|}{\textbf{85.8}} & \multicolumn{1}{c|}{\textbf{62.1}} & \multicolumn{1}{c|}{\textbf{85.8}} & \textbf{46.6} & \multicolumn{1}{c|}{\textbf{82.3}} & \multicolumn{1}{c|}{\textbf{63.6}} & \multicolumn{1}{c|}{\textbf{82.3}} & \textbf{53.6} \\ \hline
\multicolumn{1}{|c|}{\multirow{3}{*}{\rotatebox{90}{Top-3}}} & PG$_\spadesuit$ & \multicolumn{1}{c|}{\textbf{90.6}} & \multicolumn{1}{c|}{65.1}          & \multicolumn{1}{c|}{\textbf{90.6}} & 43.3          & \multicolumn{1}{c|}{\textbf{86.1}} & \multicolumn{1}{c|}{64.8}          & \multicolumn{1}{c|}{\textbf{86.1}} & 50.5          \\ \cline{2-10} 
\multicolumn{1}{|c|}{}                       & PG        & \multicolumn{1}{c|}{88.7}          & \multicolumn{1}{c|}{67.2}          & \multicolumn{1}{c|}{84.5}          & 48.8          & \multicolumn{1}{c|}{85.9}          & \multicolumn{1}{c|}{68.3}          & \multicolumn{1}{c|}{84.9}          & 57.9          \\ \cline{2-10} 
\multicolumn{1}{|c|}{}                       & CC            & \multicolumn{1}{c|}{\textbf{90.6}} & \multicolumn{1}{c|}{\textbf{69.9}} & \multicolumn{1}{c|}{\textbf{90.6}} & \textbf{56.3} & \multicolumn{1}{c|}{\textbf{86.1}} & \multicolumn{1}{c|}{\textbf{69.5}} & \multicolumn{1}{c|}{\textbf{86.1}} & \textbf{60.2} \\ \hline
\multicolumn{1}{|c|}{\multirow{3}{*}{\rotatebox{90}{Top-4}}} & PG$_\spadesuit$ & \multicolumn{1}{c|}{\textbf{93.0}} & \multicolumn{1}{c|}{68.3}          & \multicolumn{1}{c|}{\textbf{93.0}} & 46.1          & \multicolumn{1}{c|}{\textbf{88.7}} & \multicolumn{1}{c|}{66.7}          & \multicolumn{1}{c|}{\textbf{88.7}} & 52.2          \\ \cline{2-10} 
\multicolumn{1}{|c|}{}                       & PG        & \multicolumn{1}{c|}{90.9}          & \multicolumn{1}{c|}{71.7}          & \multicolumn{1}{c|}{87.2}          & 54.1          & \multicolumn{1}{c|}{88.0}          & \multicolumn{1}{c|}{71.6}          & \multicolumn{1}{c|}{86.9}          & 61.2          \\ \cline{2-10} 
\multicolumn{1}{|c|}{}                       & CC            & \multicolumn{1}{c|}{\textbf{93.0}} & \multicolumn{1}{c|}{\textbf{75.6}} & \multicolumn{1}{c|}{\textbf{93.0}} & \textbf{62.3} & \multicolumn{1}{c|}{\textbf{88.7}} & \multicolumn{1}{c|}{\textbf{73.3}} & \multicolumn{1}{c|}{\textbf{88.7}} & \textbf{64.8} \\ \hline
\multicolumn{1}{|c|}{\multirow{3}{*}{\rotatebox{90}{Top-5}}} & PG$_\spadesuit$ & \multicolumn{1}{c|}{\textbf{94.3}} & \multicolumn{1}{c|}{70.4}          & \multicolumn{1}{c|}{\textbf{94.3}} & 48.1          & \multicolumn{1}{c|}{\textbf{90.2}} & \multicolumn{1}{c|}{68.1}          & \multicolumn{1}{c|}{\textbf{90.2}} & 53.3          \\ \cline{2-10} 
\multicolumn{1}{|c|}{}                       & PG        & \multicolumn{1}{c|}{92.0}          & \multicolumn{1}{c|}{75.2}          & \multicolumn{1}{c|}{88.4}          & 58.4          & \multicolumn{1}{c|}{89.4}          & \multicolumn{1}{c|}{74.0}          & \multicolumn{1}{c|}{88.0}          & 63.9          \\ \cline{2-10} 
\multicolumn{1}{|c|}{}                       & CC            & \multicolumn{1}{c|}{\textbf{94.3}} & \multicolumn{1}{c|}{\textbf{79.4}} & \multicolumn{1}{c|}{\textbf{94.3}} & \textbf{66.8} & \multicolumn{1}{c|}{\textbf{90.2}} & \multicolumn{1}{c|}{\textbf{75.9}} & \multicolumn{1}{c|}{\textbf{90.2}} & \textbf{68.4} \\ \hline
\end{tabular}
}
}
\end{table}

\subsection{Experimental Results and Data Analysis}
\subsubsection{Answering RQ1}
Tables \ref{tab:ImageNet} and \ref{tab:GTSRB&CIFAR100} summarize the results on 
$acc_{clean}(k)$ and $acc_{cert}(k)$ for $k=1$ to $5$ (shown as Top-1 to Top-5).
We note that the work of Zhang et al. \cite{zhang2020clipped} only provides results for CBN in $acc_{clean}(k=1)$, $acc_{clean}(k=5)$, and $acc_{cert}(k=5)$ on ImageNet.
We highlight the best results in each combination of patch size, $k$'s value, dataset, and metric. 


We first discuss the result on ImageNet.
In Table~\ref{tab:ImageNet}, 
CC achieves the best results for $k$ = 2 to 5 on both metrics across all patch sizes 
compared with CBN$_\star$, PG$_\star$, PG$_\spadesuit$, and PG. For instance, in certified accuracy in top-5 (i.e.,  $acc_{cert}(k=5)$), CC achieves 69.6\%, 66.4\%, and 64.0\% if the patch sizes are 1\%, 2\%, and 3\%, which are higher than 
these of PG and PG$_\spadesuit$ by 3.1\%, 4.2\%, and 5.7\% 
and
by 11.1\%, 14.3\%, and 16.8\%, 
respectively: 
The gap between the certified accuracy of CC and those of PG or PG$_\spadesuit$ increases when the patch size increases.%
\footnote{ 
The reason is that the methodology of PG and PG$_\spadesuit$ to certify samples is to compare the bounds between labels individually, leading to an overestimation of the attack budgets in the certification analyses 
(see Sections 2.4 and 3.1).
When the patch size increases, the overestimation from this methodology also increases, while the methodology of CC is more precise (see Property~\ref{pro: tighter}).}

Although PG achieves the best certified accuracy in top-1 (i.e., $acc_{cert}(k=1)$), it has a lower clean accuracy than CC and PG$_\spadesuit$. When it comes to top-2 (i.e., $acc_{cert}(k=2)$), PG only achieves the same certified accuracy as CC if the patch size is 1\% {but lower} for the two larger patch sizes, while its clean accuracy suffers 1.2\%, 1.8\%, and 2.4\% drops compared with CC and PG$_\spadesuit$. 
The gap in certified accuracy between CC and PG also increases as $k$ increases --- PG's masking strategy (that sacrifices the clean accuracy) cannot narrow this gap.

We also observe that CC and PG$_\spadesuit$ share the same clean accuracy, PG has higher clean and certified accuracy than PG$_\star$, 
and PG has higher certified accuracy but lower clean accuracy than PG$_\spadesuit$. 
These observations reconcile our experiments setting that CC and PG$_\spadesuit$ use the same classifier $g(.)$ and our assumption that PG equipped with ViT is more powerful than PG in its original paper \cite{xiang2021patchguard}, and PG's masking strategy sacrifices clean accuracy.

\begin{figure*}[tbp] 
\centering
\begin{subfigure}[t]{0.325\textwidth}
\centering
\includegraphics[width=\textwidth]{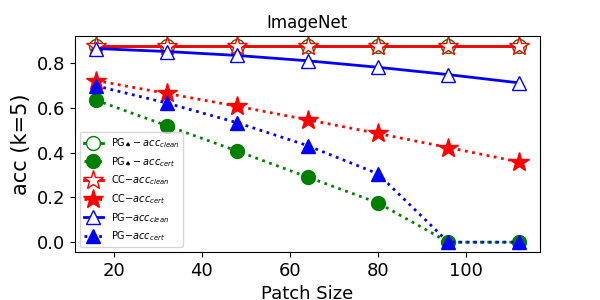}
\end{subfigure}
\begin{subfigure}[t]{0.325\textwidth}
\centering
\includegraphics[width=\textwidth]{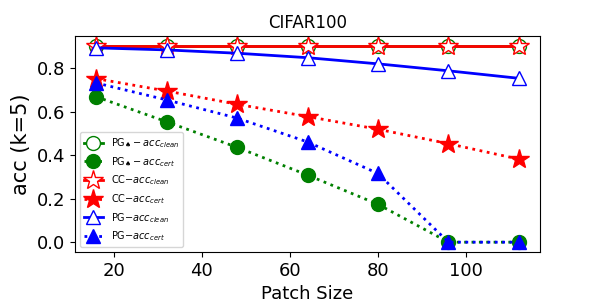}
\end{subfigure}
\begin{subfigure}[t]{0.325\textwidth}
\centering
\includegraphics[width=\textwidth]{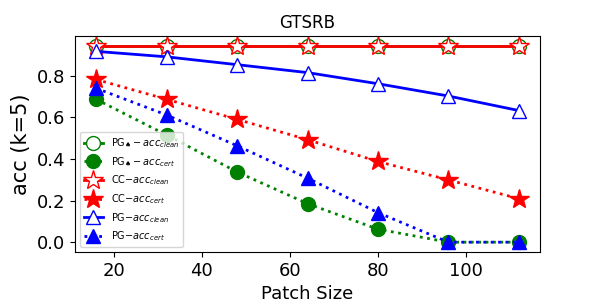}
\end{subfigure}
\begin{subfigure}[t]{0.325\textwidth}
\centering
\includegraphics[width=\textwidth]{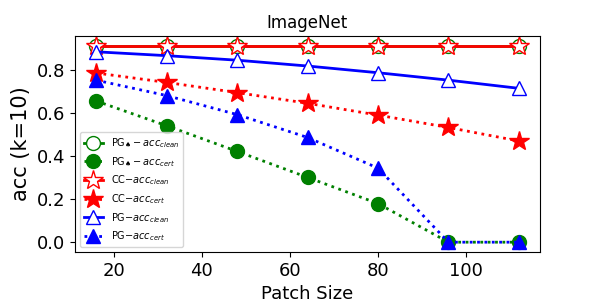}
\end{subfigure}
\begin{subfigure}[t]{0.325\textwidth}
\centering
\includegraphics[width=\textwidth]{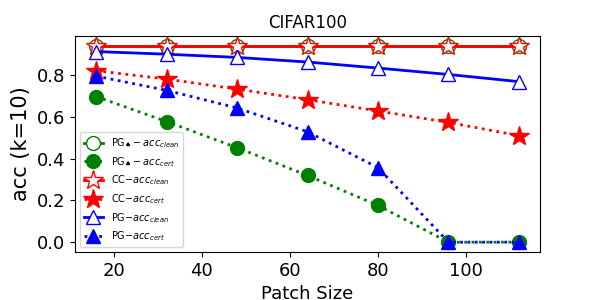}
\end{subfigure}
\begin{subfigure}[t]{0.325\textwidth}
\centering
\includegraphics[width=\textwidth]{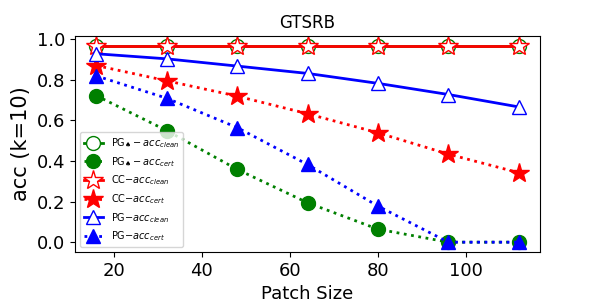}
\end{subfigure}
\captionsetup{font={small}}
\caption{Certified accuracy and clean accuracy of PG, PG$_\spadesuit$, and CC. The three columns of plots from left to right are for ImageNet, CIFAR100, and GTSRB, respectively. In a column, the $y$-axis in the upper plot is $acc_{clean}(k=5)$ and $acc_{cert}(k=5)$ and the $x$-axis is the patch size in pixels, and the lower plot shows the results for $acc_{clean}(k=10)$ and $acc_{cert}(k=10)$ with the same interpretations for both axes as the upper plot, depicted with solid lines for $acc_{clean}(k)$ and dotted lines for $acc_{cert}(k)$ and markers (stars for CC, circles for PG$_\spadesuit$, and triangles for PG). As the patch size increases, the advantage in certified accuracy for CC over PG/PG$_\spadesuit$ increases, PG's clean accuracy drops, CC's certified accuracy drops almost linearly, and CC still certifies {29.9\%--57.3\%} samples even if the patch size reaches 96.
}
\label{fig:patch_size_vary}
\end{figure*}

Similarly, 
in Table \ref{tab:GTSRB&CIFAR100}, 
CC achieves the best results in all combinations of datasets and metrics with $k = 2$ to $5$.
Across all $k$, on average, CC achieves 4.0\% and 1.3\% higher certified accuracy and 3.7\% and 1.0\% higher clean accuracy than PG on GTSRB and CIFAR100, respectively.
All the observations in Table~\ref{tab:ImageNet} above-mentioned can also be observed in Table~\ref{tab:GTSRB&CIFAR100}.

Overall, CC gradually surpasses the peer techniques in $acc_{cert}(k)$ for $k$ varying from 2 to 5, showing the superior scalability in terms of $k$. Although PG's masking strategy sacrifices the clean accuracy compared to PG$_\spadesuit$ to make PG slightly outperform CC in certified accuracy when $k=1$, it reveals its severe limitation when $k\geq 2$.












\subsubsection{Answering RQ2}\label{sec:RQ2}
The six plots in Fig.~\ref{fig:patch_size_vary} 
summarize the results of 
$acc_{cert}(k=5)$ and $acc_{cert}(k=10)$ of CC, PG, and PG$_\spadesuit$ along with their clean accuracy from smaller patch sizes to larger patch sizes on all three datasets. 
We can see that CC's certified accuracy and clean accuracy steadily surpass these of its peers.
The drop in the clean accuracy of PG becomes more severe when the patch size gets larger. This is because PG's masking strategy makes the classifier lose more and more information about a sample for prediction when using a larger patch region (mask) 
{in the operation phase.}
%
Like what we observe in Tables \ref{tab:ImageNet} and \ref{tab:GTSRB&CIFAR100},
the trends of
the certified accuracy of PG and PG$_\spadesuit$ drop much more quickly than that of CC, which are vividly shown in Fig.~\ref{fig:patch_size_vary}. 
Fig.~\ref{fig:example} further illustrates the impact of increasing the patch size on 
 min$k$ for all three techniques.

Reconciling our results in Section 
2.4, in Fig.~\ref{fig:patch_size_vary}, the certified accuracy of PG and PG$_\spadesuit$ both drop to zero when the patch size is 96 (16\% in total area; the largest patch size in \cite{patchcensor,li2022vip} is 112, and that in \cite{xiang2024patchcure} is even 192 with the same $224\times 224$ size of image). We find that 2365 ImageNet samples are well-classified, i.e., all 224 mutants of these samples are correctly predicted to their corresponding true labels. 
By the equation in \cite{levine2020randomized}, we know $\Delta=114$ for this patch size. Intuitively, if adversarial patches (modeled by patch regions) cannot affect all 224
mutants in a sample, very likely, the remaining benign mutants would make the true label rank in a position much earlier than the trivial position (1000). 
Surprisingly, PG and PG$_\spadesuit$ can only trivially certify {all of them} with min$k=1000$, which practically fail to certify them and {are consistent with the results shown} in Fig.~\ref{fig:patch_size_vary}, whereas CC certifies {all of them} as $k$-certified samples with min$k = 2$.

\begin{figure}[t] 
\centering
\begin{subfigure}[t]{0.24\textwidth}
\centering
\includegraphics[width=\textwidth,height=0.7\textwidth]{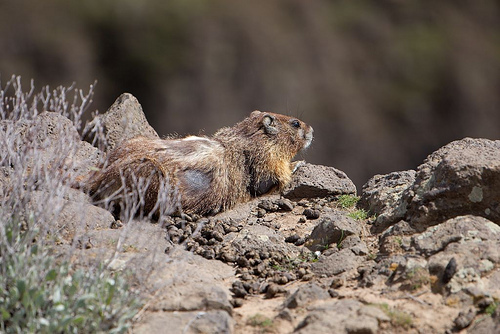}
\end{subfigure}
\begin{subfigure}[t]{0.24\textwidth}
\centering
\includegraphics[width=\textwidth,height=0.7\textwidth]{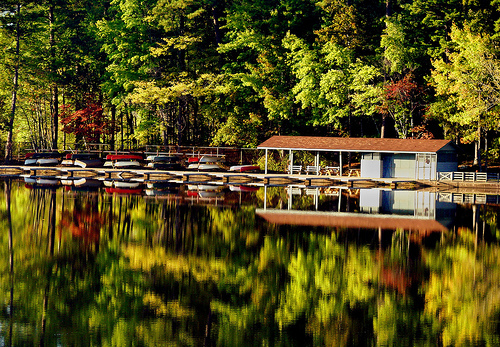}
\end{subfigure}
\captionsetup{font={small}}
\caption{Two examples from ImageNet \cite{deng2009imagenet}. On the left is an image of a marmot. It is top-1 correct for all PG$_\spadesuit$/PG/CC.
When the patch size is 16, PG$_\spadesuit$, PG, and CC each certify it as a 1-certified sample. 
When the patch size increases to 48, PG$_\spadesuit$, PG, and CC certify it as $k$-certified with min$k = 1000$ (trivial), min$k = 7$, and min$k=4$, respectively.
When the patch size increases to 80, CC certifies it as $k$-certified with min$k = 11$; nonetheless, both PG$_\spadesuit$ and PG certify it as $k$-certified with min$k = 1000$ (trivial). 
We can observe that PG and PG$_\spadesuit$ suffer from a quick performance degradation in terms of min$k$.
On the right is an image of a boathouse.  CC/PG/PG$_\spadesuit$ classify all its mutants to a boathouse as the top-1 label. However, when the patch size reaches 96 pixels, PG and PG$_\spadesuit$ can only trivially certify it with min$k$ = 1000, whereas the min$k$ for CC is only {2}.}
\label{fig:example}
\end{figure}

Overall, PG and PG$_\spadesuit$ are hard to tackle with stronger adversaries in terms of patch size due to the limitations in their designs. 
From our data analysis, they hit the wall {(certified accuracy drops to zero)} when a patch region (in area) is close to $\nicefrac{1}{5}$ of the image size. CostCert can bypass this limit from the results, showing the superior scalability in terms of patch sizes.
It also exhibits a more gentle drop than these two peers against larger patch sizes. 
These results are interesting.

\subsection{Threats to Validity}
We  
follow  \cite{xiang2021patchguard,li2022vip,salman2022certified,zhou2024crosscert,patchcensor,metzen2021efficient,xiang2022patchcleanser,chen2022towards} to compare CC with the published results of more defenders. 
We didn't compare CostCert with masking-based recovery defenders
\cite{xiang2022patchcleanser, xiang2024patchcure}.
Their certification is restricted to top-1 prediction due to their theoretical limitation; 
more importantly, they usually impractically need the patch shape and size when generating the mutants \cite{li2022vip}, 
which we do not.
Like the evaluations of above-mentioned existing defenders against patch attacks, 
we have not evaluated CostCert on real adversarial attacks as such evaluation alone cannot determine the certification of a sample, since it is not possible to guarantee the accurate detection of all potential malicious samples by empirical attacks.
The implementations of defenders may contain bugs. We have thoroughly tested them and fixed the bugs we found.  
CC takes 1048 seconds to analyze the whole ImageNet validation set, which is practical.
Like \cite{xiang2021patchguard, patchcensor,levine2020randomized,salman2022certified,zhou2024crosscert,zhou2023majority,chen2022towards}, we conduct experiments in single-patch situations in the shape of a square.
The experiment will be further generalized with more combinations of defenders, datasets, metrics, patch sizes and shapes, and base models.




\section{Related Work}
Gu et al. \cite{brown2017adversarial} propose patch attacks 
as a threat model for physically adversarial attacks in the real world \cite{levine2020randomized}. 
%
Significant research advances have been observed to counter patch attacks. 
Empirical defenders \cite{naseer2019local,hayes2018visible} can be undermined if an attacker knows their defense strategies \cite{chiang2020certified}. 
Therefore, robustness certification against patch attacks is emerging \cite{chiang2020certified,levine2020randomized,salman2022certified,chen2022towards,lin2021certified,xiang2021patchguard,metzen2021efficient,zhang2020clipped,xiang2022patchcleanser,xiang2024patchcure,zhou2023majority,patchcensor,li2022vip, zhou2024crosscert} to defense in such scenarios.
There are two research lines in this field: certified recovery \cite{chiang2020certified,levine2020randomized,salman2022certified,chen2022towards,lin2021certified,xiang2021patchguard,metzen2021efficient,zhang2020clipped,xiang2022patchcleanser,xiang2024patchcure,zhou2023majority,li2022vip} and certified detection \cite{patchcensor,li2022vip, zhou2024crosscert}. 
Certified recovery aims to ensure the recovery of the label of certified samples against the adversarial patch.
Certified detection relaxes the guarantee to provide warnings against such a patch \cite{xiang2021patchguard,xiangshort},
which 
suffer from frequent calls upon their fall-back strategies.

Chiang et al. \cite{chiang2020certified} propose the first work on certified recovery against patch attacks by interval-bound propagation, which is computationally expensive. Subsequently, DRS \cite{levine2020randomized} is proposed as the first voting-based recovery by using the margin of votes to provide the guarantee. After that, ViT \cite{salman2022certified}, ViP \cite{li2022vip}, and ECViT \cite{chen2022towards} demonstrate that Vision Transformer can achieve better certification results than convolutional neural networks.
CrossCert \cite{zhou2024crosscert} builds a detection defender atop a pair of recovery defenders.
CBN \cite{zhang2020clipped}, PatchGuard \cite{xiang2021patchguard}, MajorCert \cite{zhou2023majority}, and BagCert \cite{metzen2021efficient} investigate the relationships and constraints among the ablations and patch regions, aiming to eliminate infeasible cases, thereby improving the certified accuracy theoretically. 
On top of the commonly used voting mechanism, PatchGuard \cite{xiang2021patchguard} additionally proposes a heuristic masking strategy to enhance the certification.
However, to our knowledge, all but PatchGuard \cite{xiang2021patchguard} and CBN \cite{zhang2020clipped} reviewed above only focus on providing certification for the top-1 prediction. The newly proposed masking-based recovery \cite{xiang2022patchcleanser,xiang2024patchcure} outperformed the voting-based one in top-1 certification. 
However, they are hard to extend to the top-$k$ prediction/certification and usually require the patch size to generate corresponding mutants due to their theoretical limitation. 
Also, due to the practical need for top-$k$ predictions 
and attacks specifically designed for them 
\cite{zhang2022investigating,tursynbek2022geometry}, the need of the certification for top-$k$ predictions is emerging \cite{jia2020certified,jia2022almost}.
Yet, existing works \cite{xiang2021patchguard, zhang2020clipped} on patch robustness certification are not precise enough. 
CostCert advances the domain of certification theory by changing redundant comparisons of bounds into checking the overall cost. 

\section{Conclusion}
We have proposed a novel, scalable, precise, and cost-oriented certification technique CostCert.
CostCert is the \emph{first} certified recovery defender 
for top-$k$ predictions without inflation problems against patch attacks. 
Experiments have shown its high effectiveness and scalability compared to the baselines.
Our replication package can be found in \cite{CostCert}.


\bibliographystyle{IEEEtran}
\bibliography{references_short}

\balance

\clearpage

\begin{appendices}








\end{appendices}















\end{document}